\def\eqref#1{equation~\ref{#1}}
\def\1{\bm{1}}
\DeclareMathAlphabet{\mathsfit}{\encodingdefault}{\sfdefault}{m}{sl}
\SetMathAlphabet{\mathsfit}{bold}{\encodingdefault}{\sfdefault}{bx}{n}
\newtheorem{lemma}{Lemma}
\newtheorem{proposition}{Proposition}
\title{Diffusion Bridge Variational Inference for Deep Gaussian Processes}
\author{Jian Xu \\
School of Mathematics \\
  South China University of Technology \& \\
  RIKEN  AIP \\
  \texttt{jian.xu@riken.jp}
  \And
  Delu Zeng \thanks{Corresponding author.}\\
  School of Electronics and Information Engineering \\
  South China University of Technology \& \\
  Department of Electrical and Computer Engineering\\University of Waterloo\\
  \texttt{dlzeng@scut.edu.cn}
  \And
    Qibin Zhao \\
  RIKEN  AIP\\
  \texttt{qibin.zhao@riken.jp}
  \And
  John Paisley \\
  Columbia University\\
  \texttt{jwp2128@columbia.edu}
  }
\begin{document}

\maketitle

\begin{abstract}

Deep Gaussian processes (DGPs) enable expressive hierarchical Bayesian modeling but pose substantial challenges for posterior inference, especially over inducing variables. Denoising diffusion variational inference (DDVI) addresses this by modeling the posterior as a time-reversed diffusion from a simple Gaussian prior. However, DDVI’s fixed unconditional starting distribution remains far from the complex true posterior, resulting in inefficient inference trajectories and slow convergence. In this work, we propose Diffusion Bridge Variational Inference (DBVI), a principled extension of DDVI that initiates the reverse diffusion from a learnable, data-dependent initial distribution. This initialization is parameterized via an amortized neural network and progressively adapted using gradients from the ELBO objective, reducing the posterior gap and improving sample efficiency. To enable scalable amortization, we design the network to operate on the inducing inputs $\mathbf{Z}^{(l)}$, which serve as structured, low-dimensional summaries of the dataset and naturally align with the inducing variables' shape. DBVI retains the mathematical elegance of DDVI—including Girsanov-based ELBOs and reverse-time SDEs—while reinterpreting the prior via a Doob-bridged diffusion process. We derive a tractable training objective under this formulation and implement DBVI for scalable inference in large-scale DGPs. Across regression, classification, and image reconstruction tasks, DBVI consistently outperforms DDVI and other variational baselines in predictive accuracy, convergence speed, and posterior quality.

\end{abstract}

\section{Introduction}

Deep Gaussian processes (DGPs) \citep{damianou2013deep} extend the representational capacity of Gaussian processes (GPs) \citep{seeger2004gaussian} by composing multiple layers of latent functions, enabling flexible hierarchical Bayesian modeling. However, posterior inference in DGPs is notoriously challenging due to the non-conjugate likelihoods, strong inter-layer dependencies, and the large number of inducing variables required for scalability. Stochastic variational inference (SVI) with inducing points \citep{hensman2013gaussian,salimbeni2017doubly,xu2024sparse1} has become the standard approach, but designing accurate and efficient variational posteriors remains a central bottleneck.

Recently, denoising diffusion variational inference (DDVI) \citep{xu2024sparse} has been proposed to approximate the inducing-point posterior via the time-reversal of a diffusion stochastic differential equation (SDE) \citep{song2020score} starting from a simple Gaussian prior. By parameterizing the reverse drift with a neural network, DDVI can flexibly capture complex posteriors while retaining scalable SVI \citep{hoffman2013stochastic,xu2024variational,11202603} training. However, a key limitation of DDVI lies in its reliance on a fixed, unconditional Gaussian distribution as the start of the reverse diffusion. Since the true posterior over inducing variables is typically far from this initial distribution, the reverse-time SDE must traverse a long and complex path to reach the target, resulting in inefficient inference and slow convergence.

To address this, in this work, we propose \textbf{Diffusion Bridge Variational Inference (DBVI)}, which replaces the unconditional reverse diffusion in DDVI with  a \emph{learnable, input-conditioned distribution} that adapts over training. By parameterizing the start point of the diffusion using a neural network, gradients from the ELBO naturally push the initial distribution closer to the posterior. This effectively narrows the inference gap and alleviates the burden on the reverse SDE, yielding more stable and sample-efficient inference. Moreover, conditioning the initial distribution on observations introduces a natural connection to amortized variational inference: the generative structure of DBVI allows each posterior sample to be generated by a single forward pass of a drift network, without requiring per-dataset optimization. Unlike naive amortization that conditions directly on raw inputs—leading to high-dimensional mismatches and overfitting—our design uses the inducing inputs $\mathbf{Z}^{(l)}$ at each layer as input proxies to the amortizer. This enables batch-wise inference while preserving global dataset structure and matching the dimensionality of inducing variables.

Importantly, the theoretical elegance of DDVI---particularly its use of time-reversed SDEs, Girsanov's theorem \citep{li2020scalable}, and ELBO construction---remains fully preserved in DBVI. We build upon the same diffusion framework, but reinterpret the prior as a \emph{Doob-bridged process} whose start is parameterized by an amortized network. This allows DBVI to inherit the key benefits of DDVI while significantly improving its flexibility and inference efficiency. We derive an evidence lower bound (ELBO) objective for training DBVI within the SVI framework and provide a scalable implementation for large-scale DGPs. Empirical results on regression, classification, and image reconstruction benchmarks demonstrate that DBVI achieves more accurate posterior approximations, faster convergence, and improved predictive performance compared to DDVI and other state-of-the-art DGP inference methods.
\begin{figure*}[t]
    \centering
    \includegraphics[width=1\linewidth]{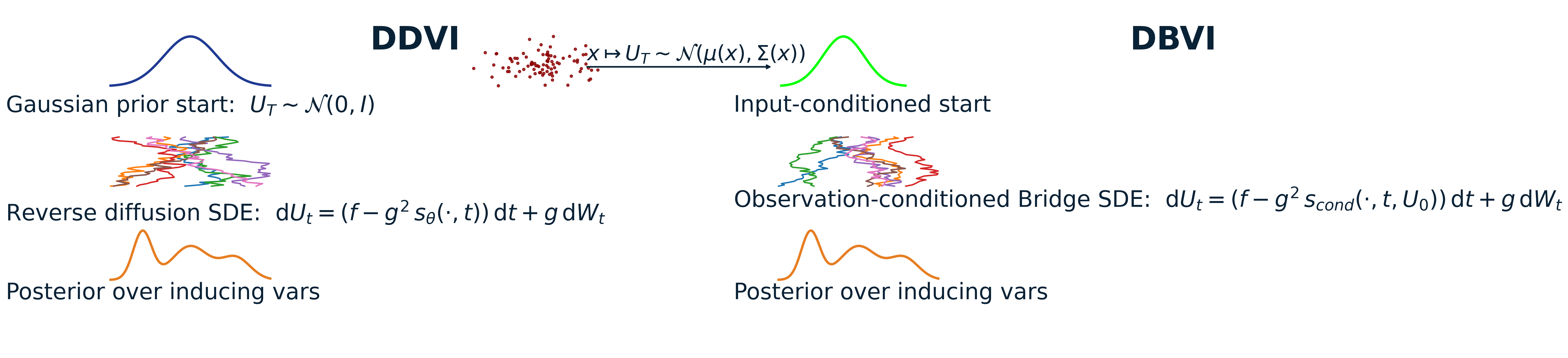}
    \caption{\textbf{Comparison between DDVI and DBVI.} 
    (Left) DDVI starts from an unconditional Gaussian prior and runs a reverse diffusion SDE towards the posterior. 
    (Right) DBVI starts from an input-conditioned initial distribution and uses an observation-conditioned diffusion bridge SDE, leading to shorter and more efficient inference trajectories.}
    \label{fig:ddvi_vs_dbvi}
\end{figure*}
Our contributions are as follows:

\begin{itemize}
\item We propose Diffusion Bridge Variational Inference (DBVI), a novel extension of DDVI that replaces the unconditional start of the reverse diffusion with a learnable, input-conditioned initial distribution, effectively reducing the inference gap and improving posterior approximation.

\item We introduce a bridge-based reinterpretation of the DDVI framework by integrating Doob’s h-transform into the variational formulation, while preserving the core machinery of reverse-time SDEs, Girsanov-based ELBO, and SVI scalability.

\item We develop a structured amortization strategy that leverages inducing locations $\mathbf{Z}^{(l)}$ as the input to the drift network, enabling batch-wise amortized inference without requiring access to the full dataset or high-dimensional conditioning on raw inputs.

\item We provide a scalable implementation of DBVI for deep Gaussian processes and validate its performance on regression, classification, and image reconstruction tasks, showing consistent improvements over DDVI and other state-of-the-art DGP inference methods in terms of accuracy, convergence speed, and sample efficiency.
\end{itemize}

\section{Related Work}
\paragraph{Deep Gaussian Process Inference.}
Scalable DGP inference initially extended sparse variational and inducing-point GP methods to multilayer settings \citep{hensman2013gaussian,damianou2013deep,salimbeni2017doubly,xu2024neural,xu2024sparse,xu2025bayesian,xu2025fully}, with DSVI improving practicality \citep{salimbeni2017doubly} but still seeming to struggle to capture complex inducing-variable posteriors, particularly in deeper models. IPVI \citep{yu2019implicit} addresses expressiveness by using a neural network to represent the inducing-point posterior and training it via an adversarial, GAN-like objective \citep{goodfellow2014generative}; however, this formulation appears hard to optimize in practice and can be unstable, which may lead to biased posterior estimates.

\paragraph{Diffusion-based Variational Inference.}
Score-based generative modeling and diffusion probabilistic models \citep{song2020score,ho2020denoising,li2023scire,chen2024rethinking,li2025evodiff,li2025generative,chen2026don} have inspired new VI methods \citep{xu2024sparse} that represent posteriors as solutions to reverse-time SDEs. DDVI \citep{xu2024sparse} applies this to DGP inference, parameterizing the reverse drift via a score network. However, its unconditional start distribution can be far from the posterior, requiring long diffusion paths and increasing variance. Our DBVI can be viewed as a strict extension of DDVI: same theoretical framework, but with bridge correction and amortized initialization, yielding both theoretical guarantees and empirical improvements.

\paragraph{Diffusion Bridge Models.}
Diffusion bridges \citep{zhou2023denoising,li2023bbdm,lin2025reciprocalla,chen2025dequantified} constrain dynamics between fixed endpoints or distributions, enabling more direct and sample-efficient transitions. Observation-conditioned bridges have been explored in Schrödinger bridge formulations \citep{shi2023diffusion} and consistency diffusion models \citep{he2024consistency}. Our DBVI adapts this idea to variational inference in DGPs, integrating an \emph{amortized} parameterization \citep{kim2018semi, agrawal2021amortized, margossian2023amortized, ganguly2023amortized} to map inputs to initial states, reducing the KL gap and improving efficiency.
\section{Method}
\subsection{Deep Gaussian Processes}

Deep Gaussian Processes (DGPs) \citep{damianou2013deep} generalize standard Gaussian Processes (GPs) by hierarchically composing multiple GP layers, enabling deep non-linear probabilistic mappings. Let $\mathbf{x} \in \mathbb{R}^d$ be an input and $\mathbf{y} \in \mathbb{R}^p$ the corresponding output. A DGP with $L$ layers defines latent variables $\{\mathbf{f}^{(l)}\}_{l=1}^L$ recursively through:
\begin{align}
    p\big(\mathbf{f}^{(l)} \mid \mathbf{f}^{(l-1)}\big) = \mathcal{GP}\Big(0, k^{(l)}\big(\mathbf{f}^{(l-1)}, \mathbf{f}^{(l-1)}\big)\Big),
\end{align}
where $k^{(l)}$ is the kernel function at layer $l$ with hyperparameters $\boldsymbol{\gamma}^{(l)}$. For scalability, each layer introduces $M_l$ inducing variables $\mathbf{u}^{(l)}$ located at inducing inputs $\mathbf{Z}^{(l)}$, with GP prior:
\begin{align}
    p_{\mathrm{prior}}\big(\mathbf{u}^{(l)}\big) = \mathcal{N}\big(\mathbf{0}, \mathbf{K}_{\mathbf{Z}\mathbf{Z}}^{(l)}\big).
\end{align}
Assuming conditional independence across layers given inducing variables, the full joint distribution over outputs $\mathbf{y}$, latent variables $\{\mathbf{f}^{(l)}\}$, and inducing variables $\{\mathbf{u}^{(l)}\}$ factorizes as:
\begin{align}
    p(\mathbf{y}, \mathbf{F}, \mathbf{U}) = \Bigg[\prod_{l=1}^L p\big(\mathbf{f}^{(l)} \mid \mathbf{f}^{(l-1)}, \mathbf{u}^{(l)}\big) \, p\big(\mathbf{u}^{(l)}\big)\Bigg]
    p\big(\mathbf{y} \mid \mathbf{f}^{(L)}\big),
\end{align}
where the inter-layer conditional distribution $p\big(\mathbf{f}^{(l)} \mid \mathbf{f}^{(l-1)}, \mathbf{u}^{(l)}\big)$ is given by the sparse GP conditional distribution
\begin{align}
\begin{aligned}
    &p\big(\mathbf{f}^{(l)} \mid \mathbf{f}^{(l-1)}, \mathbf{u}^{(l)}\big) \\
    & \qquad\quad = ~ \mathcal{N}\left(
        \mathbf{K}_{\mathbf{f}^{(l-1)}\mathbf{Z}^{(l)}} \mathbf{K}_{\mathbf{Z}^{(l)}\mathbf{Z}^{(l)}}^{-1} \mathbf{u}^{(l)},\;
        \mathbf{K}_{\mathbf{f}^{(l-1)}\mathbf{f}^{(l-1)}} - \mathbf{K}_{\mathbf{f}^{(l-1)}\mathbf{Z}^{(l)}} \mathbf{K}_{\mathbf{Z}^{(l)}\mathbf{Z}^{(l)}}^{-1} \mathbf{K}_{\mathbf{Z}^{(l)}\mathbf{f}^{(l-1)}}
    \right).
    \end{aligned}
\end{align}
Here, we denote $\mathbf{K}_{\mathbf{a}\mathbf{b}}^{(l)}$ as the kernel matrix at layer $l$ evaluated between sets $\mathbf{a}$ and $\mathbf{b}$. The input to the first layer is defined as $\mathbf{f}^{(0)} := \mathbf{x}$.
\subsection{Denoising Diffusion Variational Inference (DDVI)}

Denoising Diffusion Variational Inference (DDVI) \citep{xu2024sparse} is a recently proposed method for performing posterior inference over inducing variables in Deep Gaussian Processes (DGPs). It draws inspiration from the success of score-based generative modeling and diffusion probabilistic models \citep{ho2020denoising,song2020score}, and adapts these ideas to variational inference by modeling the variational posterior as the solution to a reverse-time stochastic differential equation (SDE). Traditional variational inference in DGPs typically relies on simple, factorized Gaussian approximations over the inducing variables $\mathbf{U} = \{\mathbf{u}^{(l)}\}_{l=1}^L$, which are often too restrictive to capture the complex, multimodal posterior arising from deep hierarchical GPs. DDVI seeks to construct more expressive variational distributions by modeling them as the terminal distribution of a \emph{reverse-time diffusion process}, effectively defining a flexible transformation from a known fixed initial distribution  to the posterior.

\paragraph{Variational posterior via reverse diffusion.}
DDVI defines the variational distribution  as the marginal at time $t=1$ of a reverse-time SDE $Q_\phi(\mathbf{U}_t)$:
\begin{align}
    \mathrm{d} \mathbf{U}_t 
    = \big[ f(\mathbf{U}_t, t) - g(t)^2 \nabla_{\mathbf{U}_t} \log p_t(\mathbf{U}_t) \big] \, \mathrm{d}t
    + g(t) \, \mathrm{d} \mathbf{W}_t, \quad t \in [0,1],
\end{align}
where $\mathbf{U}_0 \sim \mathcal{N}(\mathbf{0}, \sigma^2\mathbf{I})$ is the fixed initial distribution, $f$ and $g$ are drift and diffusion coefficients, and $p_t(\mathbf{U}_t)$ is the marginal law at time $t$ under the reverse process. The reverse SDE is not simulated directly; instead, the score function $\nabla_{\mathbf{U}_t} \log p_t(\mathbf{U}_t)$ in the drift is parameterized by a neural network $s_\phi(\mathbf{U}_t, t)$ to approximate the optimal reverse dynamics.

\paragraph{Diffusion-based ELBO.}
Rather than relying on score estimation (e.g., denoising score matching), DDVI employs a variational diffusion framework to derive a tractable evidence lower bound (ELBO). Specifically, the reverse-time process $\mathbf{U}_{t \in [0,1]}$ is treated as a variational diffusion process, and the ELBO is expressed using the Girsanov formula for likelihood ratios between stochastic processes:
\begin{eqnarray}
    \mathcal{L}_{\mathrm{DDVI}} &=& \mathbb{E}_{\mathbf{U}_{0:1} \sim Q_\phi} 
    \Big[ - \tfrac{1}{2\sigma^2}\left\Vert\mathbf{U}_1\right\Vert_2^2
+ \,\log p\!\left(\mathbf{y} \mid \mathbf{f}^{(L)}\right) - \frac12\int_0^{1} 
\,g(t)^2
\big\Vert \tfrac{1}{\kappa_t}\mathbf{U}_t
+ s_{\phi}\!\big(t,\mathbf{U}_t\big)
\big\Vert_2^2 \,\mathrm{d}t \nonumber\\
&&\qquad\quad\quad\,\,
+ \log p_{\mathrm{prior}}(\mathbf{U}_1) - \mathrm{KL}\!\left(
\mathcal{N}(0, \sigma^2 \mathbf{I})
\| \mathcal{N}(0,\,\kappa_1\,\mathbf{I})
\right) \Big],
\end{eqnarray}
where $Q_\phi$ denotes the pathwise density of the variational reverse-time SDE, $\mathbf{U}_1$ is the terminal state of the reverse diffusion SDE 
using $s_{\phi}(\cdot,t)$, 
$\mathbf{f}^{(L)}$ denotes the forward inference of the DGP at $\mathbf{U}_1$.

\paragraph{Training and implementation.}
In practice, DDVI jointly trains the variational drift network $s_\phi$ and DGP hyperparameters $\gamma$ by maximizing $\mathcal{L}_{\mathrm{DDVI}}$ using stochastic gradient descent. Sampling from $Q_\phi(\mathbf{U}_t)$ is achieved by solving the reverse SDE from $\mathbf{U}_0 \sim \mathcal{N}(\mathbf{0}, \sigma^2\mathbf{I})$, which allows for reparameterized gradients through the sampled trajectories.

\paragraph{Limitations.}
Although DDVI offers a flexible and theoretically grounded approach for inference in deep GPs, it still seems to face two key drawbacks: first, it uses an unconditional Gaussian initialization $\mathbf{U}_0 \sim \mathcal{N}(0,\sigma^2 I)$ for the reverse diffusion, which typically appears far from the true posterior over inducing variables, so the reverse-time SDE must follow a long, complex trajectory to reach the target distribution, resulting in inefficient inference, higher variance, and slower convergence; second, this initialization is not conditioned on observations, so sampling remains input-agnostic rather than amortized and scalable. These limitations motivate our Diffusion Bridge Variational Inference (DBVI), which introduces observation-conditioned diffusion bridges and amortized initializations to enable more efficient and accurate posterior inference.

\subsection{DBVI: Observation-conditioned diffusion bridge}

We begin by introducing the data-dependent initialization of DBVI. 
Instead of starting from a fixed Gaussian prior as in DDVI, we amortize the mean of the initial distribution using a neural network: 
\begin{equation}
p_0^\theta(\mathbf{U}_0\mid \mathbf{x})
= \mathcal{N}(\mathbf{U}_0;\,\mu_\theta(\mathbf{x}),\sigma^2\mathbf{I}),
\end{equation}
where only the mean $\mu_\theta(\mathbf{x})$ depends on the data, while the variance $\sigma^2$ is kept fixed. 
This initialization provides a closer match to the posterior and shortens the diffusion trajectory. 
To formalize the resulting dynamics, we now turn to a diffusion bridge representation. \vspace{5pt}

\begin{proposition}[Forward \& Reverse SDE under Doob's $h$-transform]
\label{prop:doob-bridge}
Let the initial constraint be encoded by the Doob $h$-transform with
\begin{equation}
h(\mathbf{U}_t,t,\mathbf{U}_0) \;=\; \nabla_{\mathbf{U}_t} \log p(\mathbf{U}_0 \mid\mathbf{U}_t),
\end{equation}
Then the \emph{forward bridge} has drift
\begin{equation}
\tilde f(\mathbf{U}_t,t,\mathbf{U}_0) \;=\; f(\mathbf{U}_t,t) \;+\; g(t)^2\, h(\mathbf{U}_t,t,\mathbf{U}_0),
\end{equation}
with the same diffusion coefficient \(g(t)\).
Moreover, the \emph{reverse-time bridge SDE} is
\begin{equation}
\mathrm{d}\mathbf{U}_t \;=\; \Big[f(\mathbf{U}_t,t) \;-\; g(t)^2s_{\text{cond}}(\mathbf{U}_t,t,\mathbf{U}_0)\Big]\mathrm{d}t
\;+\; g(t)\,\mathrm{d} \mathbf{W}_t,
\end{equation}
Equivalently, the \textbf{conditional score} equals \(s_{\text{cond}}(\mathbf{U}_t,t,\mathbf{U}_0)=s(\mathbf{U}_t,t,\mathbf{U}_0)+h(\mathbf{U}_t,t,\mathbf{U}_0)\).
\end{proposition}\vspace{5pt}

Proposition~\ref{prop:doob-bridge} states that, by introducing Doob’s $h$-transform, we can reinterpret the dynamics as a bridge process, which essentially bends the diffusion toward the posterior endpoint. 
The forward SDE incorporates an additional drift term that nudges the path toward the target, while the reverse-time bridge SDE involves a conditional score function $s_{\text{cond}}$. 
This result provides the mathematical foundation for DBVI, showing how conditioning on the initialization modifies both forward and reverse dynamics.

Building on this, we next leverage the bridge process trick introduced in DDVI \cite{xu2024sparse} to characterize the marginal distribution of the bridge process under a linear drift. This formulation yields a tractable Gaussian form for the bridge marginal, which will be crucial for deriving the DBVI training objective.\vspace{5pt}

\begin{proposition}[Marginal of Doob-augmented bridge process]
\label{prop:bridge-marginal}
Consider the linear forward SDE with Doob bridge correction
\begin{equation}
\mathrm{d}\mathbf{U}_t
= \Big[-\lambda(t)\,\mathbf{U}_t
+ g(t)^2\,h(\mathbf{U}_t,t,\mathbf{U}_0)\Big]\mathrm{d}t
+ g(t)\,\mathrm{d}\mathbf{B}_t,
\end{equation}
where $h(\mathbf{U}_t,t,\mathbf{U}_0)=\nabla_{\mathbf{U}_t}\log p(\mathbf{U}_0\mid \mathbf{U}_t)$
is the Doob $h$-transform. Assume isotropic initialization
\begin{equation}
p_0^\theta(\mathbf{U}_0\mid\mathbf{x}) = \mathcal{N}\big(\mu_\theta(\mathbf{x}),\,\sigma^2\mathbf{I}\big).
\end{equation}
Then for each $t\in[0,1]$, the marginal law remains Gaussian,
\begin{equation}
p_t(\mathbf{U}^\text{Bri}\mid\mathbf{x})
= \mathcal{N}\big(\mathbf{U}^\text{Bri};\,\mathbf{m}_t,\,\kappa_t\,\mathbf{I}\big),
\end{equation}
where the mean $\mathbf{m}_t$ and variance $\kappa_t$ satisfy the coupled ODE system
\begin{align}
\frac{\mathrm{d}}{\mathrm{d}t}\,\mathbf{m}_t
&= -\big(\lambda(t)+c(t)\big)\,\mathbf{m}_t
+ c(t)\,a(t)\,\mu_\theta(\mathbf{x}), 
\quad \mathbf{m}_0=\mu_\theta(\mathbf{x}), \\
\frac{\mathrm{d}}{\mathrm{d}t}\,\kappa_t
&= -2\big(\lambda(t)+c(t)\big)\,\kappa_t
+ g(t)^2
+ 2\,c(t)\,a(t)\,\sigma^2,
\quad \kappa_0=\sigma^2,
\end{align}
with
\[
a(t)=e^{-\Lambda(t)}, 
\quad \Lambda(t)=\int_0^t \lambda(s)\,\mathrm{d}s,
\]
and correction coefficient
\[
c(t)=g(t)^2\,\frac{\sigma^2\,a(t)^2}
{ \big(a(t)^2\sigma^2+q(t)\big)\,q(t) },
\qquad
q(t)=a(t)^2\int_0^t \frac{g(r)^2}{a(r)^2}\,\mathrm{d}r.
\]
In the special case $c(t)\equiv 0$, this reduces to the bridge process trick in DDVI.
\end{proposition}

Proposition~\ref{prop:bridge-marginal} provides a closed-form expression for the bridge marginal, 
with mean $\mathbf{m}_t$ and variance $\kappa_t$ determined by the amortized initialization $\mu_\theta(\mathbf{x})$. This Gaussian form plays a crucial role in deriving the training objective, as it enables an explicit comparison between the amortized start distribution and the bridge marginal. In particular, it provides the analytical structure needed to express the KL divergence in a tractable score–matching form, which is then incorporated into the DBVI loss. Additional derivations and detailed proofs are provided in the Appendix. Finally, we combine the above results to obtain the DBVI training loss. 
In particular, we express the KL divergence in a score–matching form, yielding a tractable ELBO that can be optimized efficiently. \vspace{5pt}

\begin{proposition}[DBVI loss with amortized mean]
\label{prop:loss}
Let $p_0^\theta(\mathbf{u}\mid\mathbf{x})=\mathcal{N}(\mu_\theta(\mathbf{x}),\sigma^2\mathbf{I})$
be the data-dependent start, and let $(\mathbf{m}_t,\kappa_t)$ be the mean/variance of the
reference bridge marginal at time $t\in[0,1]$ (given  by Proposition~\ref{prop:bridge-marginal} via ODEs in the Doob-augmented case).
Then the pathwise KL between the variational reverse bridge $Q_\phi$ and the reference bridge
admits a score–matching representation. Consequently, a tractable per–mini-batch ELBO is
\begin{align}
\begin{aligned}
\ell_{\mathrm{DBVI}}(\theta,\phi,\gamma) =~
& \mathbb{E}_{\mathbf{U}_{0:1}\sim Q_\phi}\Big[
 - \log p_0^\theta\big(\mathbf{U}_1\big)
 + \tfrac{N}{B}\,\log p\big(\mathbf{y}_I \mid \mathbf{f}^{(L)}\big)
\\
&\qquad\qquad\,\, - \frac{1}{2}\int_0^{1} g(t)^2
 \left\Vert
 \tfrac{1}{\kappa_t}(\mathbf{U}_t-\mathbf{m}_t)
 + s_{\mathrm{cond}}\big(t,\mathbf{U}_t,\mathbf{U}_0\big)
 \right\Vert_2^2\,\mathrm{d}t
\\
&\qquad\qquad\,\, + \log p_{\mathrm{prior}}(\mathbf{U}_1) 
- \mathrm{KL}\left(
 \mathcal{N}(\mu_\theta(\mathbf{x}),\sigma^2\mathbf{I})\;\middle\|\;
 \mathcal{N}(\mathbf{m}_1,\,\kappa_1\,\mathbf{I})
 \right)
\Big],
\end{aligned}
\end{align}
where $B$ is the batch size, $N$ is the dataset size, $s_{\mathrm{cond}}=s_\phi+h$ is the
conditional score used by the reverse \emph{bridge} SDE, $\mathbf{U}_1$ is its terminal state,
$\mathbf{f}^{(L)}$ denotes the DGP forward mapping at $\mathbf{U}_1$, and $\gamma$ collects
DGP hyperparameters. When $\mu_\theta(\mathbf{x})=\mathbf{0}$ (and thus $\mathbf{m}_t\equiv\mathbf{0}$),
the objective recovers the original DDVI loss.
\end{proposition}\vspace{5pt}

Proposition~\ref{prop:loss} shows that DBVI departs from DDVI in two essential ways:
(i) the initialization is amortized via $\mu_\theta(\mathbf{x})$, which induces a
time-dependent reference mean $\mathbf{m}_t$ in the bridge marginal, and
(ii) the loss involves the conditional score $s_{\mathrm{cond}}=s_\phi+h$, explicitly
accounting for the bridge correction.
When the amortized mean collapses to zero (so that $\mathbf{m}_t\equiv \mathbf{0}$),
the objective reduces exactly to the original DDVI loss, recovering DDVI as a special case. We summarize the full training procedure of DBVI for deep Gaussian processes in Algorithm~\ref{alg:dbvi}.
\begin{algorithm*}[ht]
\caption{Diffusion Bridge Variational Inference (DBVI) for DGPs}
\label{alg:dbvi}
\begin{algorithmic}
   \STATE {\bfseries Input:} Training data $\mathbf{X}, \mathbf{y}$; mini-batch size $B$; forward drift $f$; diffusion scale $g$
   \STATE {\bfseries Parameters:} DGP hyperparameters $\gamma$;\; bridge score network $s_\phi$;\; amortizer $\mu_\theta(\cdot)$ with fixed variance $\sigma^2$ parameter
   \STATE {\bfseries Precompute reference bridge marginals} $(\mathbf m_t,\kappa_t)$ for $t\in[0,1]$:
   numerically integrate the ODEs for $(\mathbf m_t,\kappa_t)$ in Prop.~\ref{prop:bridge-marginal} with
          $a(t)=e^{-\Lambda(t)}$.

   \REPEAT
   \STATE Sample mini-batch indices $I \subset \{1,\ldots,N\}$ with $|I|=B$
   \STATE \textbf{Amortized start:} draw $\mathbf{U}_0 \sim p_\theta^0(\cdot\mid \mathbf{X}_I)=\mathcal{N}\big(\mu_\theta(\mathbf{X}_I),\,\sigma^2\mathbf{I}\big)$
   \STATE Initialize the integral accumulator $L_0 \gets 0$
   \FOR{$k = 0$ {\bfseries to} $K-1$}
      \STATE Set $t_{s} \gets \frac{k}{K}$,\quad $t_{s+1}\gets \frac{k+1}{K}$,\quad draw $\boldsymbol{\epsilon}_{t_s}\sim\mathcal{N}(\mathbf{0},\mathbf{I})$
      \STATE Compute conditional score:
             $s_{\mathrm{cond}} \big(t_s,\mathbf{U}_{t_s},\mathbf{U}_0\big) \gets s_\phi\big(t_s,\mathbf{U}_{t_s},\mathbf{X}_I,\mathbf{y}_I\big)\;+\; h\big(t_s,\mathbf{U}_{t_s},\mathbf{U}_0\big)$
      \STATE \textbf{Reverse bridge SDE step:}\vspace{5pt}
      \STATE \hspace{4mm}$\displaystyle \mathbf{U}_{t_{s+1}}\;=\;\mathbf{U}_{t_s}\;-\;f\big(\mathbf{U}_{t_s},t_s\big)\,\Delta t\;+\;g(t_s)^2\, s_{\mathrm{cond}}\big(t_s,\mathbf{U}_{t_s},\mathbf{U}_0\big)\,\Delta t\;+\;g(t_s)\sqrt{\Delta t}\,\boldsymbol{\epsilon}_{t_s}$\vspace{3pt}
      \STATE \textbf{Bridge marginal update:} Obtain $\mathbf m_{t_{s+1}},\kappa_{t_{s+1}}$ 
                and accumulate score–matching term:\vspace{3pt}
      \STATE \hspace{4mm}$\displaystyle L_{t_{s+1}}\;=\;L_{t_s}\;+\;g(t_{s+1})^2 \left\| \frac{\mathbf{U}_{t_{s+1}}-\mathbf m_{t_{s+1}}}{\kappa_{t_{s+1}}}\;+\; s_{\mathrm{cond}}\big(t_{s+1},\mathbf{U}_{t_{s+1}},\mathbf{U}_0\big) \right\|_2^2\,\Delta t$
   \ENDFOR
   \STATE Set $\{\mathbf{u}^{({\ell})}\}_{\ell=1}^L \gets \mathbf{U}_1$
   \FOR{$\ell = 1$ {\bfseries to} $L$}
     \STATE Draw $\boldsymbol{\epsilon}^{(\ell)} \sim \mathcal{N}(\mathbf{0}, \mathbf{I})$ and compute
     \STATE \hspace{1mm}$\displaystyle 
     \mathbf{f}^{(\ell)}
     \;=\;
     \mathbf{K}^{(\ell)}_{\mathbf{F}^{(\ell-1)}\mathbf{Z}}
     \big(\mathbf{K}^{(\ell)}_{\mathbf{Z}\mathbf{Z}}\big)^{-1}
     \mathbf{u}^{(\ell)}
     \;+\;
     \Big(
       \mathbf{K}^{(\ell)}_{\mathbf{F}^{(\ell-1)}\mathbf{F}^{(\ell-1)}}
       -
       \mathbf{K}^{(\ell)}_{\mathbf{F}^{(\ell-1)}\mathbf{Z}}
       \big(\mathbf{K}^{(\ell)}_{\mathbf{Z}\mathbf{Z}}\big)^{-1}
       \mathbf{K}^{(\ell)}_{\mathbf{Z}\mathbf{F}^{(\ell-1)}}
     \Big)^{\frac{1}{2}}
     \boldsymbol{\epsilon}^{(\ell)}$
   \ENDFOR

   \STATE \vspace{-1mm}
   \STATE \textbf{Mini-batch ELBO:}
   \STATE \hspace{0mm}$\displaystyle \ell_{\mathrm{DBVI}}(\theta,\phi,\gamma)\;=\;-\log p_\theta^0\big(\mathbf{u}_1\mid \mathbf{X}_I\big)\;+\;\log p_{\mathrm{prior}}\big(\mathbf{u}_1\big)\;+\;\frac{N}{B}\log p\big(\mathbf{y}_I\mid \mathbf{F}^{(L)}\big)$
   \STATE \hspace{23mm}$\displaystyle\quad\,\, -\;\frac{1}{2}\,L_1\;-\;\mathrm{KL}\Big(\mathcal{N}\big(\mu_\theta(\mathbf{X}_I),\sigma^2\mathbf{I}\big)\,\Big\|\,\mathcal{N}\big(\mathbf m_{1},\kappa_{1}\mathbf{I}\big)\Big)$\vspace{5pt}
   \STATE Gradient update of $\ell_{\mathrm{DBVI}}(\theta,\phi,\gamma)$
   \UNTIL{$\theta,\phi,\gamma$ converge}
\end{algorithmic}
\end{algorithm*}

\begin{figure}[htbp]
  \centering
  \includegraphics[width=1\textwidth]{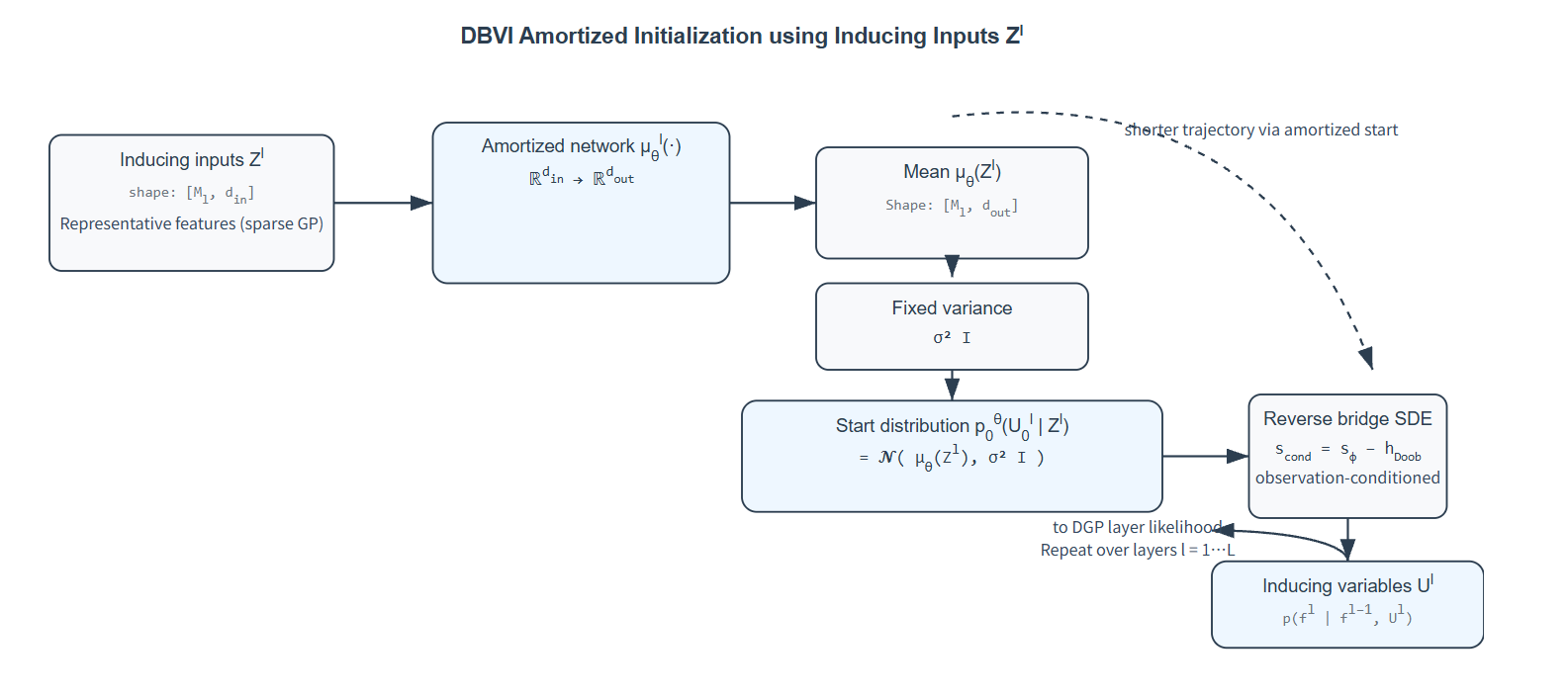}
  \caption{DBVI amortized initialization using inducing inputs $Z^{(l)}$.}
  \label{fig:dbvi_amortizer}
\end{figure}
\subsection{Structure of the Amortized Network $\mu_\theta$}

In DBVI, the amortized network $\mu_\theta$ provides the parameters of the initial distribution
$p_0^\theta(\mathbf{U}_0\mid \mathbf{x})$. Ideally, it would take the full dataset $\mathbf{x}$ as input
and output variational parameters for the inducing variables $\mathbf{U}$. In practice, this is infeasible: full-dataset amortization is prohibitively expensive in both memory and computation, while mini-batch amortization observes only a small subset of the data and can therefore yield biased or unstable parameter estimates. Moreover, there is a fundamental dimensional mismatch: mini-batch inputs have shape $[B, d_{\text{in}}]$, whereas the inducing variables at layer $l$ lie in $[M_l, d_{\text{out}}]$. Directly mapping $\mathbf{x}$ to $\mathbf{u}^{(l)}$ would require flattening high-dimensional tensors, which breaks efficient batching and scales poorly with model depth.

To avoid these issues, we use the inducing points $\mathbf{Z}^{(l)}$ as inputs to the amortizer.
This follows sparse GP intuition: $\mathbf{Z}$ can be viewed as representative features of the
dataset. At layer $l$, the inducing inputs $\mathbf{Z}^{(l)}\in \mathbb{R}^{M_l\times d_{\text{in}}}$
are already aligned with the size of $\mathbf{u}^{(l)}$. We therefore define a layer-wise network
\[
\mu_\theta^{(l)}:\mathbb{R}^{d_{\text{in}}}\to \mathbb{R}^{d_{\text{out}}},
\]
which maps each inducing input $\mathbf{z}^{(l)}$ to a $d_{\text{out}}$-dimensional representation.
Applying it to all inducing points gives
$
\mu_\theta^{(l)}(\mathbf{Z}^{(l)})\in \mathbb{R}^{M_l\times d_{\text{out}}}.
$ This choice leverages information encoded in $\mathbf{Z}^{(l)}$, which is updated during training,
and yields an amortization scheme whose output dimension naturally matches the inducing variables,
without requiring access to the full dataset.

\begin{figure}[htbp]
  \centering
\includegraphics[width=0.70\linewidth]{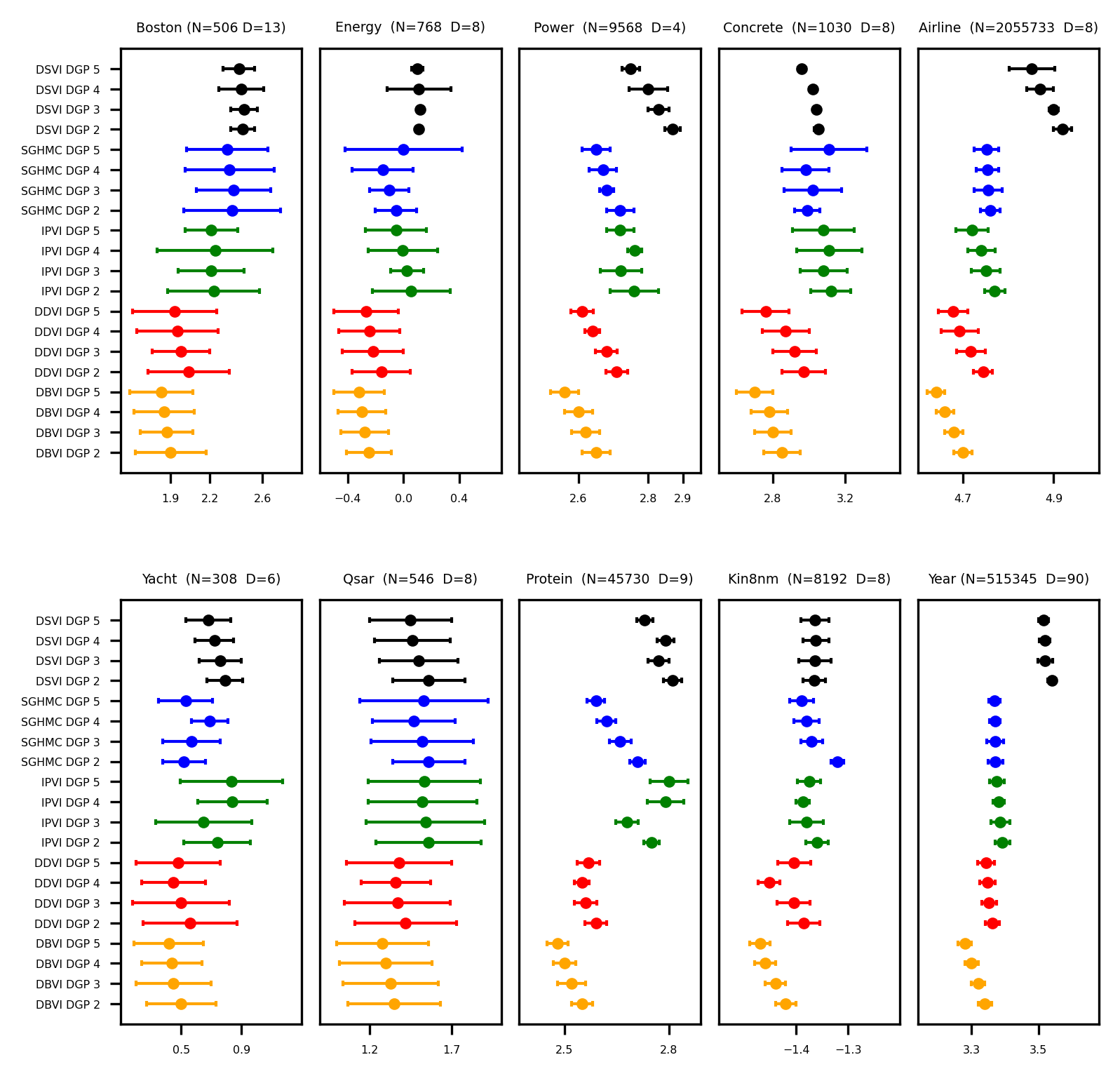}
\caption{Test mean NLL (with one standard deviation error bars) of deep Gaussian processes with different inference methods (DDVI, IPVI, SGHMC, DSVI, and our proposed DBVI) across 10 benchmark datasets .}
  \label{fig:uci}
\end{figure}

\begin{table}[t]
\centering
\caption{Mean test accuracy ($\%$) and training details achieved by DSVI, SGHMC, IPVI, DDVI, and our proposed DBVI on three image classification datasets. Results are shown for 3 and 4 layers as indicated, and runtime is given per iteration.}

\begin{tabular}{llcccccc}
\toprule
Data Set & Model & Time$3$ & Iter$3$ & Acc$3$ & Time$4$ & Iter$4$ & Acc$4$ \\
\midrule
 & DSVI & 0.34s & 20K & 97.17 & 0.54s & 20K & 97.41 \\
MNIST & IPVI & 0.49s & 20K & 97.58 & 0.62s & 20K & 97.80 \\
 & SGHMC & 1.14s & 20K & 97.25 & 1.22s & 20K & 97.55 \\
 & DDVI & 0.38s & 20K & 98.84 & 0.50s & 20K & 99.01 \\
 & \textbf{DBVI (ours)} & 0.41s & 20K & \textbf{99.02} & 0.55s & 20K & \textbf{99.10} \\
\hline

 & DSVI & 0.34s & 20K & 87.45 & 0.50s & 20K & 87.99 \\
Fashion & IPVI & 0.48s & 20K & 88.23 & 0.61s & 20K & 88.90 \\
 & SGHMC & 1.21s & 20K & 86.88 & 1.25s & 20K & 87.08 \\
 & DDVI & 0.40s & 20K & 90.36 & 0.55s & 20K & 90.85 \\
 & \textbf{DBVI (ours)} & 0.43s & 20K & \textbf{90.53} & 0.60s & 20K & \textbf{91.07} \\
\hline

 & DSVI & 0.43s & 20K & 91.47 & 0.66s & 20K & 91.79 \\
CIFAR-10 & IPVI & 0.62s & 20K & 92.79 & 0.78s & 20K & 93.52 \\
 & SGHMC & 8.04s & 20K & 92.62 & 8.61s & 20K & 92.94 \\
 & DDVI & 0.45s & 20K & 95.23 & 0.69s & 20K & 95.56 \\
 & \textbf{DBVI (ours)} & 0.49s & 20K & \textbf{95.42} & 0.74s & 20K & \textbf{95.68} \\
\bottomrule
\end{tabular}
\label{tab:image-classification}
\end{table}

\section{Experiments}

We empirically evaluate DBVI against recent state-of-the-art inference methods for Deep Gaussian Processes (DGPs). 
Our evaluation covers regression on UCI benchmarks, image classification on standard vision datasets, large-scale physics datasets, and an unsupervised reconstruction task. 
Across these diverse settings, we assess both predictive performance and posterior quality, with particular attention to convergence behavior and scalability. 
The goal of our experiments is to demonstrate that DBVI consistently improves predictive accuracy and uncertainty estimation while remaining computationally efficient.

\subsection{Baselines and Setup}
We compare DBVI with the following baselines:  
\textbf{DSVI} \citep{salimbeni2017doubly}, the standard mean-field Gaussian variational approximation;  
\textbf{IPVI} \citep{yu2019implicit}, which parameterizes the posterior with a neural network trained adversarially;  
\textbf{SGHMC} \citep{havasi2018inference}, a sampling-based inference approach;  
and \textbf{DDVI} \citep{xu2024sparse}, the diffusion-based inference method upon which DBVI builds.  

\begin{table}[t]
\caption{Test AUC values for large-scale classification datasets. Uses random 90\% / 10\% training and test splits.}
\centering

\begin{tabular}{ l r c c }
\toprule
 & & SUSY & HIGGS \\
\cmidrule(lr){3-4}
 & $N$ & 5,500,000 & 11,000,000\\
 & $D$ & 18 & 28 \\

\midrule
\multirow{4}{*}{\shortstack{DSVI  \\ $M=128$}} & $L=2$      & 0.876 & 0.830 \\
& $L=3$      & 0.877 & 0.837 \\
& $L=4$      & 0.878 & 0.841 \\
& $L=5$      & 0.878 & 0.846 \\
\midrule
\multirow{4}{*}{\shortstack{IPVI \\ $M=128$}} & $L=2$      & 0.879 & 0.843 \\
& $L=3$      & 0.882 & 0.847 \\
& $L=4$      & 0.883 & 0.850 \\
& $L=5$      & 0.883 & 0.852 \\
\midrule
\multirow{4}{*}{\shortstack{SGHMC\\ $M=128$}} & $L=2$      & 0.879 & 0.842 \\
& $L=3$      & 0.881 & 0.846 \\
& $L=4$      & 0.883 & 0.850 \\
& $L=5$      & 0.884 & 0.853 \\
\midrule
\multirow{4}{*}{\shortstack{DDVI \\ $M=128$}} & $L=2$      & \textbf{0.883} & \textbf{0.849} \\
& $L=3$      & \textbf{0.885} & \textbf{0.852} \\
& $L=4$      & \textbf{0.887} & \textbf{0.856} \\
& $L=5$      & \textbf{0.886} & \textbf{0.857} \\
\midrule
\multirow{4}{*}{\shortstack{DBVI (ours) \\ $M=128$}} & $L=2$      & \textbf{0.885} & \textbf{0.851} \\
& $L=3$      & \textbf{0.887} & \textbf{0.854} \\
& $L=4$      & \textbf{0.889} & \textbf{0.858} \\
& $L=5$      & \textbf{0.889} & \textbf{0.859} \\
\midrule
\end{tabular}

\label{tab:large_class}
\end{table}

All models use RBF kernels and $M=128$ inducing points per layer unless otherwise specified. 
For fairness, we adopt identical initialization and hyperparameter ranges across methods, and optimize using Adam with learning rate $0.01$.

\subsection{Regression on UCI Benchmarks}
We evaluate our method on $10$ widely used UCI regression datasets with sample sizes $N$ ranging from a few hundred to over two million, using an 80/20 train/test split. We report root mean squared error (RMSE) and negative log-likelihood (NLL) on the held-out test data, as summarized in Figure~\ref{fig:uci} and Figure~\ref{fig:uci-mse} . Consistent with prior work, we consider deep Gaussian processes with 2--5 layers. The results demonstrate that DBVI consistently outperforms all baseline methods, with particularly pronounced gains on large-scale datasets such as YearMSD and Airline, where unconditional DDVI suffers from slow convergence. By leveraging amortized bridge initialization, DBVI effectively shortens the diffusion path length, resulting in both improved posterior approximation and lower predictive error. Figures \ref{fig:ddvi_dbvi_rmse_energy}, \ref{fig:ddvi_dbvi_rmse_concrete}, and \ref{fig:ddvi_dbvi_rmse_power} show the test RMSE of a 2-layer DGP trained with DDVI and DBVI during the early stage of optimization.

\subsection{Image Classification}
We next evaluate DBVI on MNIST, Fashion-MNIST, and CIFAR-10. 
For CIFAR-10, we adopt ResNet-20 convolutional features as inputs to the DGP classifier. 
We report both test accuracy and per-iteration runtime in Table \ref{tab:image-classification}. 
Across all three datasets, DBVI consistently surpasses DDVI and other baselines. 
In particular, on CIFAR-10 with 4-layer DGPs, DBVI achieves an accuracy of $95.68\%$, slightly higher than DDVI’s $95.56\%$. 
These findings underscore the advantage of amortized conditioning in handling complex high-dimensional posteriors.

\subsection{Large-Scale Classification}
We further evaluate DBVI on two large-scale physics datasets, SUSY ($5.5$M examples) and HIGGS ($11$M examples). 
We report AUC scores with 2--5 layer DGPs under random $90/10$ train-test splits. 
As shown in Table~\ref{tab:large_class}, DBVI consistently outperforms DDVI across all depths, yielding the best overall AUC values. 
Compared to SGHMC, DBVI attains comparable or higher performance while being substantially more computationally efficient, 
highlighting its scalability and effectiveness in modeling complex, high-dimensional posteriors at scale.

\subsection{Unsupervised Reconstruction}
Finally, we evaluate posterior quality on the Frey Faces dataset using a missing-data reconstruction task. 
Following prior work, we randomly mask $75\%$ of pixels for a subset of training images and task the models with recovering the originals. 
Table~\ref{tab:comparison2} reports reconstruction RMSE and test log-likelihood. 
DBVI achieves the lowest RMSE and highest likelihood, surpassing DDVI as well as variational and sampling-based baselines. 
In qualitative comparisons, reconstructions generated by DBVI are visually sharper and demonstrate better calibrated uncertainty than those of competing methods.

\begin{table}
\centering
\caption{Mean RMSE and NLL achieved by DSVI, SGHMC, IPVI, DDVI, and our proposed DBVI on the GPLVM data recovery task (Frey Faces). Standard deviation is shown in parentheses. Runtime is given per iteration.}

\centering
\begin{tabular}{llcccc}
\toprule
Data Set & Model & Time  & Iter & RMSE  & NLL  \\
\midrule
 & DSVI & 0.32s & 20K & 8.32 (0.20) & 1.49 (0.02)  \\
Frey Faces & IPVI & 0.42s & 20K & 7.91 (0.40) & 1.33 (0.02)  \\
 & SGHMC & 1.13s & 20K & 7.95 (0.30) & 1.36 (0.03)  \\
 & DDVI & 0.36s & 20K & 7.64 (0.20) & 1.17 (0.01)  \\
 & \textbf{DBVI (ours)} & 0.40s & 20K & \textbf{7.52} (0.18) & \textbf{1.12} (0.01) \\
\bottomrule
\end{tabular}
\label{tab:comparison2}
\end{table}
\section{Conclusion}
We introduced Diffusion Bridge Variational Inference (DBVI), a principled extension of DDVI that initiates the reverse diffusion from a learnable, input-conditioned distribution. By incorporating Doob’s h-transform into the variational formulation, DBVI preserves the theoretical elegance of diffusion-based inference while substantially reducing the inference gap. Our structured amortization strategy, which conditions on inducing inputs, further enables scalable and data-efficient posterior approximation in deep Gaussian processes. Empirical results across regression, classification, and image reconstruction tasks confirm that DBVI consistently improves over DDVI and other state-of-the-art variational baselines in predictive accuracy, convergence speed, and sample efficiency. These findings highlight the benefits of bridging-based inference for scalable Bayesian learning. Future directions include extending DBVI to broader probabilistic models, integrating with advanced inducing-point strategies for large-scale applications, and exploring theoretical guarantees of diffusion bridges in variational inference.
\section*{Acknowledgements}
This work was supported in part by grants from National Natural Science Foundation of China 
(52539005), the fundamental research program of Guangdong, China (2023A1515011281), the China Scholarship Council (202306150167), Guangdong Basic and Applied Basic Research Foundation (24202107190000687), Foshan Science and Technology Research Project(2220001018608). 

\bibliography{iclr2026_conference}
\bibliographystyle{iclr2026_conference}
\newpage
\appendix
\renewcommand{\theproposition}{A.\arabic{proposition}}
\setcounter{proposition}{0}

\section{Proof of propositions}

\begin{proposition}[Forward \& reverse SDE under Doob's $h$-transform]
\label{prop1:doob-bridge}
Let the initial constraint be encoded by the Doob $h$-transform with
\begin{equation}
h(\mathbf{U}_t,t,\mathbf{U}_0) \;=\; \nabla_{\mathbf{U}_t} \log p(\mathbf{U}_0 \mid\mathbf{U}_t),
\end{equation}
Then the \emph{forward bridge} has drift
\begin{equation}
\tilde f(\mathbf{U}_t,t,\mathbf{U}_0) \;=\; f(\mathbf{U}_t,t) \;+\; g(t)^2\, h(\mathbf{U}_t,t,\mathbf{U}_0),
\end{equation}
with the same diffusion coefficient \(g(t)\).
Moreover, the \emph{reverse-time bridge SDE} is
\begin{equation}
\mathrm{d}\mathbf{U}_t \;=\; \Big[f(\mathbf{U}_t,t) \;-\; g(t)^2s_{\text{cond}}(\mathbf{U}_t,t,\mathbf{U}_0)\Big]\mathrm{d}t
\;+\; g(t)\,\mathrm{d} \mathbf{W}_t,
\end{equation}
Equivalently, the \textbf{conditional score} equals \(s_{\text{cond}}(\mathbf{U}_t,t,\mathbf{U}_0)=s(\mathbf{U}_t,t,\mathbf{U}_0)+h(\mathbf{U}_t,t,\mathbf{U}_0)\).
\end{proposition}
\begin{proof}
\textbf{Setup.}
Consider the forward diffusion on $[0,1]$,
\begin{equation}\label{eq:ref}
\mathrm{d}\mathbf{U}_t \;=\; f(\mathbf{U}_t,t)\,\mathrm{d}t \;+\; g(t)\,\mathrm{d}\mathbf{B}_t,
\end{equation}
where $\mathbf{B}_t$ is a standard Brownian motion, and $g(t)>0$ is scalar. Assume standard regularity (existence/uniqueness, non–degeneracy, smooth strictly positive densities). Let $p_t$ denote the time–$t$ marginal density of \eqref{eq:ref}. The (backward) Kolmogorov operator is
\begin{equation}
(\mathcal{L}_t\varphi)(u) \;=\; f(u,t)\!\cdot\!\nabla \varphi(u) \;+\; \tfrac{g(t)^2}{2}\,\Delta \varphi(u).
\end{equation}

\medskip
\noindent\textbf{(i) Doob $h$–transform with an initial–point  constraint.}
Define the space–time positive function
\begin{equation}
H(t,u) \;:=\; p(\mathbf{U}_0 \mid \mathbf{U}_t=u).
\end{equation}
By the Markov property and Chapman–Kolmogorov, $H$ is space–time harmonic for $\partial_t+\mathcal{L}_t$, i.e.
\begin{equation}\label{eq:H-harmonic}
\partial_t H(t,u) \;+\; (\mathcal{L}_t H)(t,u) \;=\; 0.
\end{equation}
The Doob $H$–transform of \eqref{eq:ref} is the time–inhomogeneous Markov process with generator
\begin{equation}
(\mathcal{L}_t^{H}\varphi)(u) \;=\; H(t,u)^{-1}\,\bigl(\mathcal{L}_t(H\varphi)\bigr)(u).
\end{equation}
Expanding $\mathcal{L}_t(H\varphi)$ (see remarks below) and using \eqref{eq:H-harmonic} to cancel the $\partial_t H$ term yields, for smooth $\varphi$,
\begin{equation}
\mathcal{L}_t^{H}\varphi(u)
= f(u,t)\!\cdot\!\nabla \varphi(u) + \tfrac{g(t)^2}{2}\Delta\varphi(u)
+ g(t)^2\,\nabla\!\log H(t,u)\!\cdot\!\nabla\varphi(u).
\end{equation}
Therefore, the transformed process is again an It\^o diffusion with the \emph{same} diffusion coefficient $g(t)$ and drift
\begin{equation}
\tilde f(u,t,\mathbf{U}_0)
= f(u,t) + g(t)^2\,\nabla_u \log H(t,u)
= f(u,t) + g(t)^2\,\nabla_u \log p(\mathbf{U}_0 \mid u).
\end{equation}
With the proposition's definition
\begin{equation}
h(\mathbf{U}_t,t,\mathbf{U}_0) \;:=\; \nabla_{\mathbf{U}_t}\log p(\mathbf{U}_0 \mid \mathbf{U}_t),
\end{equation}
we obtain the forward bridge SDE
\begin{equation}
\mathrm{d}\mathbf{U}_t
= \Big[f(\mathbf{U}_t,t) + g(t)^2\,h(\mathbf{U}_t,t,\mathbf{U}_0)\Big]\mathrm{d}t
+ g(t)\,\mathrm{d}\mathbf{B}_t,
\end{equation}
i.e.\ $\tilde f(\mathbf{U}_t,t,\mathbf{U}_0)=f(\mathbf{U}_t,t)+g(t)^2 h(\mathbf{U}_t,t,\mathbf{U}_0)$ with unchanged diffusion $g(t)$.

\medskip
\noindent\textbf{(ii) Reverse–time SDE of the initial–point bridge.}
Let $\tilde p_t(u):=p(\mathbf{U}_t=u \mid \mathbf{U}_0)$ denote the time-$t$ conditional density of the bridge. By the Haussmann–Pardoux time–reversal formula for diffusions with isotropic diffusion matrix $a(t)=g(t)^2 I$ (so $\nabla\!\cdot a\equiv 0$), the time–reversed bridge $\overleftarrow{\mathbf{U}}_t=\mathbf{U}_{1-t}$ solves
\begin{equation}
\mathrm{d}\overleftarrow{\mathbf{U}}_t
=
\Big[\tilde f(\overleftarrow{\mathbf{U}}_t,1-t,\mathbf{U}_0)
- g(1-t)^2 \nabla \log \tilde p_{\,1-t}(\overleftarrow{\mathbf{U}}_t)\Big]\mathrm{d}t
+ g(1-t)\,\mathrm{d}\overleftarrow{\mathbf{W}}_t.
\end{equation}
Re-indexing $t\mapsto 1-t$ and renaming the Brownian motion gives the reverse–time SDE in forward orientation:
\begin{equation}
\mathrm{d}\mathbf{U}_t
=
\Big[\tilde f(\mathbf{U}_t,t,\mathbf{U}_0) - g(t)^2 \nabla \log \tilde p_t(\mathbf{U}_t)\Big]\mathrm{d}t
+ g(t)\,\mathrm{d}\mathbf{W}_t.
\end{equation}
Substituting $\tilde f(\mathbf{U}_t,t,\mathbf{U}_0)=f(\mathbf{U}_t,t)+g(t)^2 h(\mathbf{U}_t,t,\mathbf{U}_0)$ yields
\begin{equation}
\mathrm{d}\mathbf{U}_t
=
\Big[f(\mathbf{U}_t,t) + g(t)^2 h(\mathbf{U}_t,t,\mathbf{U}_0)
- g(t)^2 \nabla \log \tilde p_t(\mathbf{U}_t)\Big]\mathrm{d}t
+ g(t)\,\mathrm{d}\mathbf{W}_t.
\end{equation}
Define the \emph{conditional score}
\begin{equation}
s_{\mathrm{cond}}(\mathbf{U}_t,t,\mathbf{U}_0)
:= \nabla_{\mathbf{U}_t}\log \tilde p_t(\mathbf{U}_t)
= \nabla_{\mathbf{U}_t}\log p(\mathbf{U}_t\mid \mathbf{U}_0),
\end{equation}
to obtain the claimed reverse–time bridge SDE:
\begin{equation}
\mathrm{d}\mathbf{U}_t
=
\Big[f(\mathbf{U}_t,t) - g(t)^2 s_{\mathrm{cond}}(\mathbf{U}_t,t,\mathbf{U}_0)\Big]\mathrm{d}t
+ g(t)\,\mathrm{d}\mathbf{W}_t.
\end{equation}
\medskip
\noindent\textbf{(iii) Conditional score identity (for the initial–point constraint).}
By Bayes’ rule,
\begin{equation}
\log p(\mathbf{U}_t\mid \mathbf{U}_0)
= \log p(\mathbf{U}_0\mid \mathbf{U}_t) + \log p_t(\mathbf{U}_t) - \log p(\mathbf{U}_0).
\end{equation}
Taking $\nabla_{\mathbf{U}_t}$ gives

\begin{figure}[htbp]
  \centering

    \includegraphics[width=0.70\linewidth]{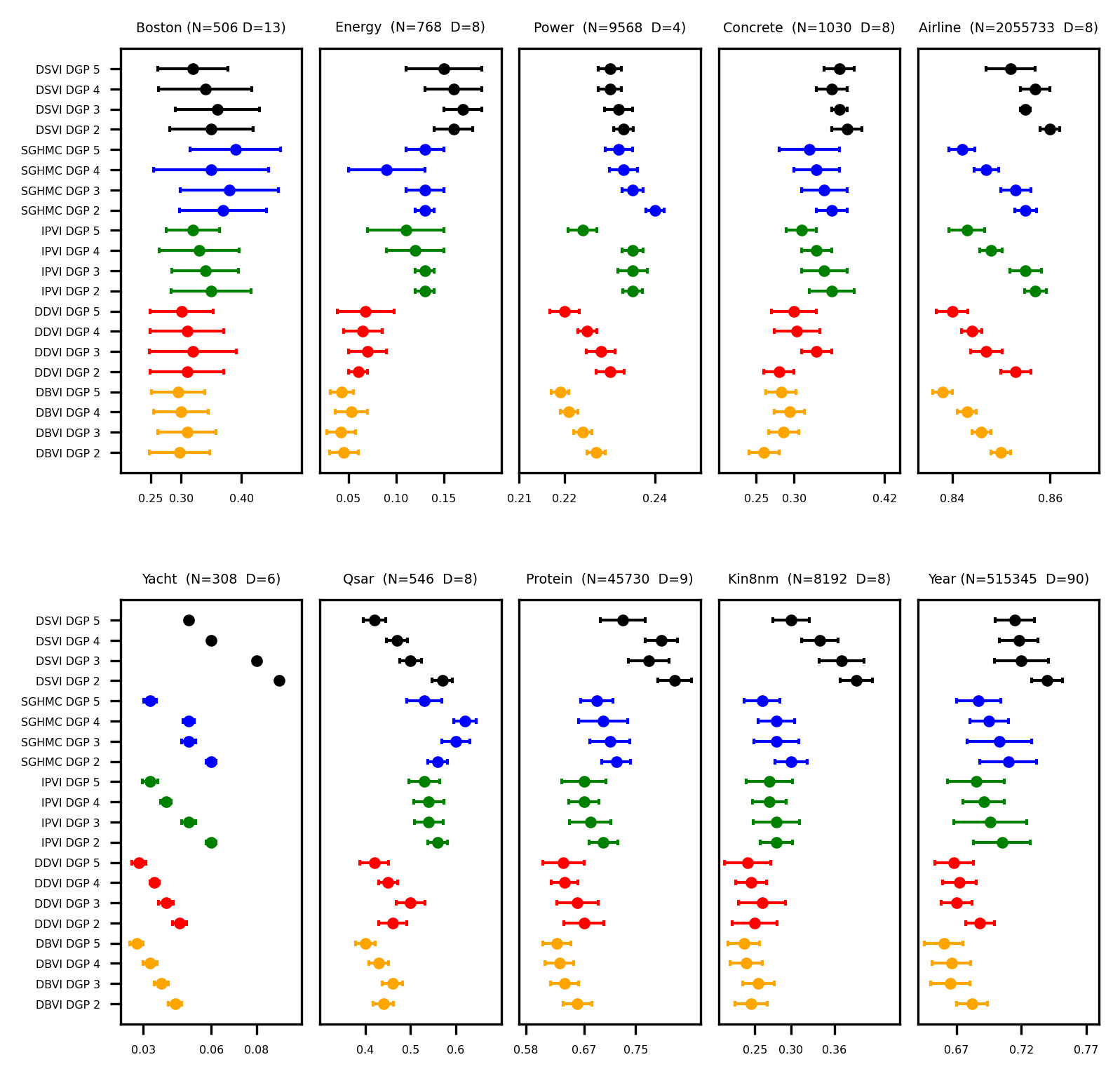}

  \caption{Test MSE (with one standard deviation error bars) of deep Gaussian processes with different inference methods (DDVI, IPVI, SGHMC, DSVI, and our proposed DBVI) across 10 benchmark datasets .}
  \label{fig:uci-mse}
\end{figure}

\begin{equation}
s_{\mathrm{cond}}(\mathbf{U}_t,t,\mathbf{U}_0)
= \nabla_{\mathbf{U}_t}\log p_t(\mathbf{U}_t) + h(\mathbf{U}_t,t,\mathbf{U}_0).
\end{equation}
If we write the unconditional score as $s(\mathbf{U}_t,t):=\nabla_{\mathbf{U}_t}\log p_t(\mathbf{U}_t)$, this is
\begin{equation}
s_{\mathrm{cond}} = s + h.
\end{equation}
\end{proof}

\paragraph{Remarks: explicit derivation for Doob $H$–transform generator.}
Fix $0\le s\le t\le 1$. The \emph{transformed semigroup} acting on a test function $\varphi$ is
\begin{equation}
(T^{H}_{s,t}\varphi)(u)
\;:=\; \frac{1}{H(s,u)}\,\mathbb{E}\!\left[\,H(t,\mathbf{U}_t)\,\varphi(\mathbf{U}_t)\,\big|\,\mathbf{U}_s=u\right].
\end{equation}
The (backward) generator $\mathcal L_t^{H}$ is defined by
\begin{equation}
\partial_t\,(T^{H}_{s,t}\varphi)(u)\;=\;(T^{H}_{s,t}\,\mathcal L_t^{H}\varphi)(u),
\qquad\text{with}\quad T^{H}_{t,t}=\mathrm{Id}.
\end{equation}
Applying It\^o to the product $H(t,\mathbf{U}_t)\,\varphi(\mathbf{U}_t)$ under the forward SDE \ref{eq:ref} and taking conditional expectations gives
\begin{equation}
\frac{\mathrm{d}}{\mathrm{d}t}\,\mathbb{E}\!\left[H(t,\mathbf{U}_t)\,\varphi(\mathbf{U}_t)\,\big|\,\mathbf{U}_s=u\right]
=
\mathbb{E}\!\left[\big(\partial_t+\mathcal L_t\big)\!\Big(H(t,\mathbf{U}_t)\,\varphi(\mathbf{U}_t)\Big)\,\Big|\,\mathbf{U}_s=u\right].
\end{equation}
By the space–time harmonicity of $H$,
\begin{equation}\label{eq:space-time-harmonic}
(\partial_t+\mathcal L_t)H(t,u)=0,
\end{equation}
we can expand $(\partial_t+\mathcal L_t)(H\varphi)$ and cancel the $\partial_t H$ terms cleanly. Since the backward generator is
\begin{equation}
(\mathcal L_t\psi)(u)= f(u,t)\!\cdot\!\nabla\psi(u)+\tfrac{g(t)^2}{2}\,\Delta\psi(u),
\end{equation}
the \emph{product rules} for gradient and Laplacian give
\begin{equation}
\nabla(H\varphi)=H\,\nabla\varphi+\varphi\,\nabla H,
\qquad
\Delta(H\varphi)=H\,\Delta\varphi+\varphi\,\Delta H+2\,\nabla H\!\cdot\!\nabla\varphi.
\end{equation}
Hence
\begin{align}
\mathcal L_t(H\varphi)
&= f\!\cdot\!\nabla(H\varphi) + \tfrac{g(t)^2}{2}\,\Delta(H\varphi)\\
&= H\,\underbrace{\big(f\!\cdot\!\nabla\varphi+\tfrac{g(t)^2}{2}\Delta\varphi\big)}_{=\mathcal L_t\varphi}
\;+\;\varphi\,\underbrace{\big(f\!\cdot\!\nabla H+\tfrac{g(t)^2}{2}\Delta H\big)}_{=\mathcal L_t H}
\;+\;g(t)^2\,\nabla H\!\cdot\!\nabla\varphi\\[2mm]
&= H\,\mathcal L_t\varphi \;+\; \varphi\,\mathcal L_t H \;+\; g(t)^2\,\nabla H\!\cdot\!\nabla\varphi.
\end{align}
Therefore,
\begin{align}
(\partial_t+\mathcal L_t)(H\varphi)
&= (\partial_t H)\,\varphi + H\,\partial_t\varphi
\;+\;H\,\mathcal L_t\varphi + \varphi\,\mathcal L_t H + g(t)^2\,\nabla H\!\cdot\!\nabla\varphi\\
&= H\big(\partial_t+\mathcal L_t\big)\varphi
\;+\;\underbrace{\big((\partial_t+\mathcal L_t)H\big)}_{=\,0\ \text{by } \eqref{eq:space-time-harmonic}}\varphi
\;+\;g(t)^2\,\nabla H\!\cdot\!\nabla\varphi\\
&= H\big(\partial_t+\mathcal L_t\big)\varphi \;+\; g(t)^2\,\nabla H\!\cdot\!\nabla\varphi.
\end{align}
Dividing by $H(t,u)$ (i.e., multiplying by the scalar $H(t,u)^{-1}$) gives the pointwise identity
\begin{equation}\label{eq:generator-identity}
\frac{1}{H}\,(\partial_t+\mathcal L_t)(H\varphi)
\;=\; (\partial_t+\mathcal L_t)\varphi \;+\; g(t)^2\,\nabla\!\log H\!\cdot\!\nabla\varphi.
\end{equation}
Comparing with the definition of the transformed semigroup $T^{H}_{s,t}$ (which removes the explicit $\partial_t$ on the left via the semigroup relation), we identify the \emph{backward generator} of the $H$–transform at time $t$ as
\begin{equation}
(\mathcal L_t^{H}\varphi)(u)
\;=\; (\mathcal L_t\varphi)(u) \;+\; g(t)^2\,\nabla\!\log H(t,u)\!\cdot\!\nabla\varphi(u).
\end{equation}
Since $H(t,u)=p(\mathbf{U}_0\mid \mathbf{U}_t=u)$, writing $h(\mathbf{U}_t,t,\mathbf{U}_0):=\nabla_{\mathbf{U}_t}\log p(\mathbf{U}_0\mid \mathbf{U}_t)$ yields the drift correction
\begin{equation}
\tilde f(\mathbf{U}_t,t,\mathbf{U}_0)= f(\mathbf{U}_t,t)+g(t)^2\,h(\mathbf{U}_t,t,\mathbf{U}_0),
\end{equation}
with the \emph{same} diffusion coefficient $g(t)$.
\medskip
\begin{proposition}[Marginal of Doob-augmented bridge process]
\label{prop1:bridge-marginal-doob}
Consider the linear forward SDE with Doob bridge correction
\[
\mathrm{d}\mathbf{U}_t
= \Big[-\lambda(t)\,\mathbf{U}_t
+ g(t)^2\,h(\mathbf{U}_t,t,\mathbf{U}_0)\Big]\mathrm{d}t
+ g(t)\,\mathrm{d}\mathbf{B}_t,
\]
where $h(\mathbf{U}_t,t,\mathbf{U}_0)=\nabla_{\mathbf{U}_t}\log p(\mathbf{U}_0\mid \mathbf{U}_t)$
is the Doob $h$-transform. Assume isotropic initialization
\[
p_0^\theta(\mathbf{U}_0\mid\mathbf{x}) = \mathcal{N}\!\big(\mu_\theta(\mathbf{x}),\,\sigma^2\mathbf{I}\big).
\]
Then for each $t\in[0,1]$, the marginal law remains Gaussian,
\[
p_t(\mathbf{U}^\text{Bri}\mid\mathbf{x})
= \mathcal{N}\!\big(\mathbf{U}^\text{Bri};\,\mathbf{m}_t,\,\kappa_t\,\mathbf{I}\big),
\]
where the mean $\mathbf{m}_t$ and variance $\kappa_t$ satisfy the coupled ODE system
\begin{align}
\label{49}
\frac{\mathrm{d}}{\mathrm{d}t}\,\mathbf{m}_t
&= -\!\big(\lambda(t)+c(t)\big)\,\mathbf{m}_t
+ c(t)\,a(t)\,\mu_\theta(\mathbf{x}), 
\quad \mathbf{m}_0=\mu_\theta(\mathbf{x}), \\
\frac{\mathrm{d}}{\mathrm{d}t}\,\kappa_t
&= -2\!\big(\lambda(t)+c(t)\big)\,\kappa_t
+ g(t)^2
+ 2\,c(t)\,a(t)\,\sigma^2,
\quad \kappa_0=\sigma^2,
\end{align}
with
\[
a(t)=e^{-\Lambda(t)}, 
\quad \Lambda(t)=\int_0^t \lambda(s)\,\mathrm{d}s,
\]
and correction coefficient
\[
c(t)=g(t)^2\,\frac{\sigma^2\,a(t)^2}
{ \big(a(t)^2\sigma^2+q(t)\big)\,q(t) },
\qquad
q(t)=a(t)^2\int_0^t \frac{g(r)^2}{a(r)^2}\,\mathrm{d}r.
\]
In the special case $c(t)\equiv 0$, this reduces to the bridge process trick in DDVI..
\end{proposition}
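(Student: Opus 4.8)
The plan is to ride the linear--Gaussian structure of every ingredient. First I would solve the reference linear SDE $\mathrm{d}\mathbf{U}_t = -\lambda(t)\mathbf{U}_t\,\mathrm{d}t + g(t)\,\mathrm{d}\mathbf{B}_t$ by variation of constants, $\mathbf{U}_t = a(t)\mathbf{U}_0 + a(t)\int_0^t \tfrac{g(s)}{a(s)}\,\mathrm{d}\mathbf{B}_s$ with $a(t)=e^{-\Lambda(t)}$, so that It\^o's isometry gives the Gaussian transition $p(\mathbf{U}_t\mid\mathbf{U}_0)=\mathcal{N}(a(t)\mathbf{U}_0,\,q(t)\mathbf{I})$ with $q(t)=a(t)^2\int_0^t g(r)^2/a(r)^2\,\mathrm{d}r$, and --- convolving with the initialization $\mathcal{N}(\mu_\theta(\mathbf{x}),\sigma^2\mathbf{I})$ --- the reference marginal $p_t=\mathcal{N}(a(t)\mu_\theta(\mathbf{x}),\,(a(t)^2\sigma^2+q(t))\mathbf{I})$. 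This already isolates $a(t)$ and $q(t)$ exactly as in the statement.

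Second, I would make the Doob factor explicit. Since both $p(\mathbf{U}_t\mid\mathbf{U}_0)$ and $p_0^\theta(\mathbf{U}_0\mid\mathbf{x})$ are Gaussian, Bayes' rule makes $p(\mathbf{U}_0\mid\mathbf{U}_t)$ Gaussian, and differentiating its log-density in $\mathbf{U}_t$ (equivalently, writing $h=\nabla\log p(\mathbf{U}_t\mid\mathbf{U}_0)-\nabla\log p_t$ by Bayes) yields $h(\mathbf{U}_t,t,\mathbf{U}_0)=-\tfrac{\mathbf{U}_t-a(t)\mathbf{U}_0}{q(t)}+\tfrac{\mathbf{U}_t-a(t)\mu_\theta(\mathbf{x})}{a(t)^2\sigma^2+q(t)}$. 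Substituting into the bridge drift $-\lambda(t)\mathbf{U}_t+g(t)^2h$ shows it is affine in $\mathbf{U}_t$, with $\mathbf{U}_t$-coefficient exactly $-(\lambda(t)+c(t))$ where $c(t)=g(t)^2\big(\tfrac1{q(t)}-\tfrac1{a(t)^2\sigma^2+q(t)}\big)=\tfrac{g(t)^2\sigma^2 a(t)^2}{(a(t)^2\sigma^2+q(t))\,q(t)}$, which is precisely the stated correction coefficient, with the remaining intercept terms proportional to $\mathbf{U}_0$ and to $\mu_\theta(\mathbf{x})$.

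Third, because the drift is affine and the law of $\mathbf{U}_0$ is Gaussian, the marginal $p_t(\mathbf{U}^{\text{Bri}}\mid\mathbf{x})$ stays Gaussian for every $t$; write it $\mathcal{N}(\mathbf{m}_t,\kappa_t\mathbf{I})$. Taking expectations in the SDE gives the mean ODE, and applying It\^o to $\|\mathbf{U}_t-\mathbf{m}_t\|^2$ (equivalently, matching the Gaussian ansatz in the Fokker--Planck equation for the $h$-transformed generator from Proposition~\ref{prop1:doob-bridge}) gives the variance ODE. The only subtlety is that the drift depends on the random endpoint $\mathbf{U}_0$: I would handle this by first conditioning on $\mathbf{U}_0$, integrating the affine moment ODEs in closed form, then averaging over $\mathbf{U}_0\sim\mathcal{N}(\mu_\theta(\mathbf{x}),\sigma^2\mathbf{I})$ via the law of total variance, which is what produces the $c(t)a(t)\mu_\theta(\mathbf{x})$ source term in the mean equation and the $2c(t)a(t)\sigma^2$ source term in the variance equation. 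Repeated use of $a'=-\lambda a$, $q'=-2\lambda q+g^2$ and $g^2\big(1/q-1/(a^2\sigma^2+q)\big)=c$ then collapses everything to the stated coupled system with $\mathbf{m}_0=\mu_\theta(\mathbf{x})$, $\kappa_0=\sigma^2$. Finally, setting $c(t)\equiv0$ kills the Doob term, leaving $\dot{\mathbf{m}}_t=-\lambda(t)\mathbf{m}_t$ and $\dot\kappa_t=-2\lambda(t)\kappa_t+g(t)^2$, i.e.\ the DDVI bridge-process trick.

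I expect the main obstacle to be the third step: verifying that after marginalizing over $\mathbf{U}_0$ the coupled moment ODEs close \emph{exactly} in the claimed form --- in particular that the variance source term is $2c(t)a(t)\sigma^2$ and not some other $t$-dependent multiple --- together with the bookkeeping around the removable $q(0)=0$ singularity in $c(t)$ and in the integrating factors used to solve the ODEs.
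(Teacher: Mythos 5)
Your first two steps are, modulo notation, exactly the paper's Steps 1--2: the paper obtains $p(\mathbf{U}_0\mid\mathbf{U}_t)=\mathcal{N}(\boldsymbol{m}_{0|t},S(t)\mathbf{I})$ by joint-Gaussian conditioning and writes $h=\frac{K(t)}{S(t)}(\mathbf{U}_0-\boldsymbol{m}_{0|t})$ with $K=\sigma^2a/(a^2\sigma^2+q)$ and $S=\sigma^2q/(a^2\sigma^2+q)$; expanding this reproduces your Bayes-rule expression for $h$ term by term, and both routes give the same $c(t)=g^2K^2/S$. The mean ODE then follows for you exactly as it does for the paper, since taking expectations in the SDE only requires $\mathbb{E}[\mathbf{U}_0]=\mu_\theta(\mathbf{x})$.

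The genuine gap is the variance equation --- precisely the step you flag as ``the main obstacle'' and leave unexecuted. Because the drift carries the term $g(t)^2\,(a/q)\,\mathbf{U}_0$, the second moments do not close in $(\mathbf{m}_t,\kappa_t)$ alone: writing $\rho_t:=\operatorname{Cov}(\mathbf{U}_t,\mathbf{U}_0)$ (your $\Phi(t)\sigma^2$ from the law of total variance), a direct computation gives
\[
\dot\kappa_t=-2\big(\lambda+c\big)\kappa_t+g^2+\frac{2g^2a}{q}\,\rho_t,
\qquad
\dot\rho_t=-\big(\lambda+c\big)\rho_t+\frac{g^2a}{q}\,\sigma^2,\quad \rho_0=\sigma^2 .
\]
The stated source term $2c\,a\,\sigma^2$ corresponds to substituting $\rho_t=a^2\sigma^4/(a^2\sigma^2+q)$, but this function does not solve the $\rho$-ODE: with $\lambda\equiv0$, $a\equiv1$, $q=\int_0^tg^2$, its derivative is $-\sigma^4g^2/(\sigma^2+q)^2<0$ while the right-hand side of the $\rho$-ODE is strictly positive. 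So your ``condition on $\mathbf{U}_0$, then law of total variance'' plan, carried out faithfully, yields a three-equation system whose variance source disagrees with the proposition; the asserted collapse via $a'=-\lambda a$ and $q'=-2\lambda q+g^2$ does not happen. Be aware that the paper's own Step~4 does not resolve this either: it inserts the base-process value $\operatorname{Cov}(\mathbf{U}_t,\mathbf{U}_0)=a\sigma^2$ (which is not preserved by the corrected dynamics) and then simplifies $2(c/K)(a\sigma^2-K\kappa_t)$ to $2ca\sigma^2-2c\kappa_t$, an identity that would require $K=1$. To complete a correct proof you must carry $\rho_t$ explicitly and either derive the resulting (different) source term or justify why the cross-covariance may be frozen at a reference value.
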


\begin{proof}
We work in the isotropic, linear-Gaussian setting adopted in the main text: the base forward SDE is
\begin{equation}
\mathrm{d}\mathbf{U}_t \;=\; -\lambda(t)\,\mathbf{U}_t\,\mathrm{d}t \;+\; g(t)\,\mathrm{d}\mathbf{B}_t,
\qquad
\mathbf{U}_0 \sim \mathcal{N}\!\big(\mu_\theta(\mathbf{x}),\,\sigma^2\mathbf{I}\big),
\end{equation}
with $a(t)=e^{-\Lambda(t)}$ and $\Lambda(t)=\int_0^t\lambda(s)\,\mathrm{d}s$. In this OU setting,
\begin{equation}
\mathbb{E}[\mathbf{U}_t] \;=\; a(t)\,\mu_\theta(\mathbf{x}), 
\qquad
\operatorname{Cov}(\mathbf{U}_t) \;=\; a(t)^2 \sigma^2 \mathbf{I} \;+\; q(t)\,\mathbf{I},
\end{equation}
where
\begin{equation}
q(t)\;=\;a(t)^2 \int_0^t \frac{g(r)^2}{a(r)^2}\,\mathrm{d}r.
\end{equation}
Moreover, the joint law of $(\mathbf{U}_0,\mathbf{U}_t)$ is Gaussian with
\begin{equation}
\operatorname{Cov}(\mathbf{U}_0,\mathbf{U}_t) \;=\; \sigma^2 a(t)\,\mathbf{I}.
\end{equation}
These identities are standard and match the bridge process used in the DDVI paper.

\paragraph{Step 1: Conditional $p(\mathbf{U}_0\mid \mathbf{U}_t)$.}
By joint-Gaussian conditioning,
\begin{equation}
p(\mathbf{U}_0\mid \mathbf{U}_t)\;=\;\mathcal{N}\!\big(\, \boldsymbol{m}_{0|t},\, \boldsymbol{S}_{0|t}\,\big),
\end{equation}
with isotropic (scalar) parameters
\begin{equation}
\boldsymbol{m}_{0|t} \;=\; \mu_\theta(\mathbf{x}) \;+\; K(t)\,\big(\mathbf{U}_t - a(t)\mu_\theta(\mathbf{x})\big), 
\qquad
\boldsymbol{S}_{0|t} \;=\; S(t)\,\mathbf{I},
\end{equation}
where
\begin{equation}
K(t)\;=\;\frac{\sigma^2 a(t)}{a(t)^2\sigma^2+q(t)},
\qquad
S(t)\;=\;\sigma^2 - \frac{\sigma^4 a(t)^2}{a(t)^2\sigma^2+q(t)}
\;=\; \frac{\sigma^2 q(t)}{a(t)^2\sigma^2+q(t)}.
\end{equation}

\paragraph{Step 2: Doob $h$-transform (bridge) term.}
Define the Doob correction
\begin{equation}
h(\mathbf{U}_t,t,\mathbf{U}_0)\;=\;\nabla_{\mathbf{U}_t}\log p(\mathbf{U}_0\mid \mathbf{U}_t).
\end{equation}
Since $p(\mathbf{U}_0\mid \mathbf{U}_t)=\mathcal{N}(\boldsymbol{m}_{0|t},S(t)\mathbf{I})$ and $\boldsymbol{m}_{0|t}$ is affine in $\mathbf{U}_t$ with coefficient $K(t)\mathbf{I}$, we have
\begin{equation}
\nabla_{\mathbf{U}_t}\log p(\mathbf{U}_0\mid \mathbf{U}_t)
\;=\;
\big(\nabla_{\mathbf{U}_t}\boldsymbol{m}_{0|t}\big)^\top S(t)^{-1}\,(\mathbf{U}_0-\boldsymbol{m}_{0|t})
\;=\; \frac{K(t)}{S(t)}\,\big(\mathbf{U}_0-\boldsymbol{m}_{0|t}\big).
\end{equation}
Hence the Doob-augmented forward SDE reads
\begin{equation}
\mathrm{d}\mathbf{U}_t
\;=\;
\Big[-\lambda(t)\,\mathbf{U}_t \;+\; g(t)^2\,\frac{K(t)}{S(t)}\big(\mathbf{U}_0-\boldsymbol{m}_{0|t}\big)\Big]\mathrm{d}t
\;+\; g(t)\,\mathrm{d}\mathbf{B}_t.
\tag{$\star$}
\end{equation}

\paragraph{Step 3: Mean dynamics.}
Let $\mathbf{m}_t:=\mathbb{E}[\mathbf{U}_t]$. Taking expectations in $(\star)$ and using
$\mathbb{E}[\mathbf{U}_0]=\mu_\theta(\mathbf{x})$ and 
$\boldsymbol{m}_{0|t}= \mu_\theta(\mathbf{x})+K(t)\big(\mathbf{U}_t-a(t)\mu_\theta(\mathbf{x})\big)$ gives
\begin{equation}
\mathbb{E}\big[\mathbf{U}_0-\boldsymbol{m}_{0|t}\big]
=
\mu_\theta(\mathbf{x})-\Big(\mu_\theta(\mathbf{x})+K(t)\big(\mathbf{m}_t-a(t)\mu_\theta(\mathbf{x})\big)\Big)
=
K(t)\big(a(t)\mu_\theta(\mathbf{x})-\mathbf{m}_t\big).
\end{equation}
Therefore,
\begin{equation}
\frac{\mathrm{d}}{\mathrm{d}t}\,\mathbf{m}_t
=
-\lambda(t)\,\mathbf{m}_t
\;+\;
g(t)^2\,\frac{K(t)}{S(t)}\,K(t)\,\big(a(t)\mu_\theta(\mathbf{x})-\mathbf{m}_t\big).
\end{equation}
Introduce the scalar
\begin{equation}
c(t)\;=\;g(t)^2\,\frac{K(t)^2}{S(t)}
=
g(t)^2\,\frac{\sigma^2 a(t)^2}{\big(a(t)^2\sigma^2+q(t)\big)\,q(t)},
\end{equation}
which yields
\begin{equation}
\frac{\mathrm{d}}{\mathrm{d}t}\,\mathbf{m}_t \;=\; -\big(\lambda(t)+c(t)\big)\,\mathbf{m}_t \;+\; c(t)\,a(t)\,\mu_\theta(\mathbf{x}),
\qquad
\mathbf{m}_0=\mu_\theta(\mathbf{x}).
\end{equation}

\paragraph{Step 4: Variance dynamics.}
Let $\kappa_t$ denote the (isotropic) variance so that $\operatorname{Cov}(\mathbf{U}_t)=\kappa_t\mathbf{I}$. From It\^o's lemma for $\mathbf{U}_t\mathbf{U}_t^\top$ under $(\star)$ and isotropy,
\begin{equation}
\frac{\mathrm{d}}{\mathrm{d}t}\,\kappa_t
=
-2\lambda(t)\,\kappa_t \;+\; g(t)^2
\;+\; 2\,g(t)^2\,\frac{K(t)}{S(t)}\,\operatorname{Cov}\!\big(\mathbf{U}_t,\,\mathbf{U}_0-\boldsymbol{m}_{0|t}\big)_{\text{scalar}}.
\end{equation}
Using $\operatorname{Cov}(\mathbf{U}_t,\mathbf{U}_0)=a(t)\sigma^2\mathbf{I}$ and
$\boldsymbol{m}_{0|t}=\mu_\theta(\mathbf{x})+K(t)\big(\mathbf{U}_t-a(t)\mu_\theta(\mathbf{x})\big)$, we get
\begin{equation}
\operatorname{Cov}\!\big(\mathbf{U}_t,\boldsymbol{m}_{0|t}\big)
=
K(t)\,\operatorname{Cov}(\mathbf{U}_t,\mathbf{U}_t)
=
K(t)\,\kappa_t\,\mathbf{I},
\end{equation}
hence
\begin{equation}
\operatorname{Cov}\!\big(\mathbf{U}_t,\,\mathbf{U}_0-\boldsymbol{m}_{0|t}\big)
=
a(t)\sigma^2\,\mathbf{I} - K(t)\kappa_t\,\mathbf{I}.
\end{equation}
Plugging in and using $c(t)=g(t)^2 K(t)^2/S(t)$,
\begin{equation}
\frac{\mathrm{d}}{\mathrm{d}t}\,\kappa_t
=
-2\lambda(t)\,\kappa_t \;+\; g(t)^2
\;+\; 2\,\frac{c(t)}{K(t)}\Big(a(t)\sigma^2 - K(t)\kappa_t\Big)
=
-2\big(\lambda(t)+c(t)\big)\kappa_t \;+\; g(t)^2 \;+\; 2\,c(t)\,a(t)\,\sigma^2,
\end{equation}
with $\kappa_0=\sigma^2$.

\paragraph{Step 5: Gaussian form.}
The drift in $(\star)$ is affine in $(\mathbf{U}_t,\mathbf{U}_0)$, and the driving noise is Gaussian.
Therefore $(\mathbf{U}_t,\mathbf{U}_0)$ remains jointly Gaussian, and the marginal $p_t(\mathbf{U}^\text{Bri}\mid\mathbf{x})$ is Gaussian with mean $\mathbf{m}_t$ and variance $\kappa_t\mathbf{I}$. This proves the claimed form
and the coupled ODE system. In the special case $c(t)\equiv 0$ (no Doob correction), the system reduces to the
closed-form  bridge process in DDVI paper, i.e.,
$\mathbf{m}_t=a(t)\mu_\theta(\mathbf{x})$ and
$\kappa_t=a(t)^2\sigma^2 + a(t)^2\int_0^t\!\frac{g(r)^2}{a(r)^2}\mathrm{d}r$.
\end{proof}

\begin{proposition}[DBVI loss with amortized mean]
\label{prop1:loss}
Let $p_0^\theta(\mathbf{u}\mid\mathbf{x})=\mathcal{N}(\mu_\theta(\mathbf{x}),\sigma^2\mathbf{I})$
be the data-dependent start, and let $(\mathbf{m}_t,\kappa_t)$ be the mean/variance of the
reference bridge marginal at time $t\in[0,1]$ (given  by Proposition~\ref{prop:bridge-marginal} via ODEs in the Doob-augmented case).
Then the pathwise KL between the variational reverse bridge $Q_\phi$ and the reference bridge
admits a score–matching representation. Consequently, a tractable per–mini-batch ELBO is
\begin{align}
\begin{aligned}
\ell_{\mathrm{DBVI}}(\theta,\phi,\gamma)
&= \mathbb{E}_{\mathbf{U}_{0:1}\sim Q_\phi}\Big[
 - \log p_0^\theta\!\big(\mathbf{U}_1\big)
 + \frac{N}{B}\,\log p\!\big(\mathbf{y}_I \mid \mathbf{f}^{(L)}\big)
\\[-2pt]
&\hspace{3.2cm}
 - \frac{1}{2}\int_0^{1} g(t)^2
 \left\Vert
 \frac{\mathbf{U}_t-\mathbf{m}_t}{\kappa_t}
 + s_{\mathrm{cond}}\!\big(t,\mathbf{U}_t,\mathbf{U}_0\big)
 \right\Vert_2^2\,\mathrm{d}t
\\[-2pt]
&\hspace{3.2cm}
 - \mathrm{KL}\!\left(
 \mathcal{N}(\mu_\theta(\mathbf{x}),\sigma^2\mathbf{I})\;\middle\|\;
 \mathcal{N}(\mathbf{m}_1,\,\kappa_1\,\mathbf{I})
 \right)
 + \log p_{\mathrm{prior}}(\mathbf{U}_1)
\Big],
\end{aligned}
\end{align}
where $B$ is the batch size, $N$ is the dataset size, $s_{\mathrm{cond}}=s_\phi+h$ is the
conditional score used by the reverse \emph{bridge} SDE, $\mathbf{U}_1$ is its terminal state,
$\mathbf{f}^{(L)}$ denotes the DGP forward mapping at $\mathbf{U}_1$, and $\gamma$ collects
DGP hyperparameters. When $\mu_\theta(\mathbf{x})=\mathbf{0}$ (and thus $\mathbf{m}_t\equiv\mathbf{0}$),
the objective recovers the original DDVI loss.
\end{proposition}

\begin{proof}
\textbf{Setup.}
Let $Q_\phi$ be the path law of the reverse-time bridge SDE
\[
\mathrm{d}\mathbf{U}_t
= \big[f(\mathbf{U}_t,t) - g(t)^2 s_{\mathrm{cond}}(t,\mathbf{U}_t,\mathbf{U}_0)\big]\mathrm{d}t
+ g(t)\,\mathrm{d}\mathbf{W}_t,\qquad s_{\mathrm{cond}}:=s_\phi+h,
\]
with $h(\mathbf{U}_t,t,\mathbf{U}_0)=\nabla_{\mathbf{U}_t}\log p(\mathbf{U}_0\mid \mathbf{U}_t)$. 
Let $P$ be the forward reference diffusion and $P^{\mathrm{Bri}}$ be the auxiliary forward process that shares the same drift/diffusion as $P$ but starts from
$p_0^\theta(\mathbf{u}\mid\mathbf{x})=\mathcal{N}(\mu_\theta(\mathbf{x}),\sigma^2\mathbf{I})$.
Assume Novikov’s condition so that Girsanov applies.

\medskip
\noindent\textbf{(A) Path term $\Rightarrow$ score–matching quadratic.}
By reverse-time Girsanov (see lemma \ref{lemma1}), the process KL splits into a path term plus a boundary term; the path term is
\begin{equation}\label{eq:path-KL}
\mathbb{E}_{\mathbf{U}_{0:1}\sim Q_\phi}\!\left[
\log\frac{\mathrm{d}Q_\phi}{\mathrm{d}P^{\mathrm{Bri}}}
\right]
= \frac12\int_0^{1}\! g(t)^2\,
\mathbb{E}_{Q_\phi}\Big[
\|\, s_{\mathrm{cond}}(t,\mathbf{U}_t,\mathbf{U}_0)-s_{\mathrm{Bri}}(t,\mathbf{U}_t)\,\|_2^2
\Big]\mathrm{d}t,
\end{equation}
where $s_{\mathrm{Bri}}(t,\mathbf{U}_t):=\nabla_{\mathbf{U}_t}\log p_t(\mathbf{U}_t)$ is the (marginal) score of the reference bridge at time $t$.
By  Proposition~\ref{prop:bridge-marginal}, the marginal is Gaussian 
$p_t(\mathbf{U}_t)=\mathcal N(\mathbf m_t,\kappa_t\mathbf I)$, hence
\begin{equation}
s_{\mathrm{Bri}}(t,\mathbf{U}_t)= -\frac{\mathbf{U}_t-\mathbf{m}_t}{\kappa_t}.
\end{equation}
Substituting this into \eqref{eq:path-KL} yields the integrand
\begin{equation}
 \big\|\, s_{\mathrm{cond}} - s_{\mathrm{Bri}} \,\big\|_2^2
 = \left\|\, \frac{\mathbf{U}_t-\mathbf{m}_t}{\kappa_t} + s_{\mathrm{cond}}(t,\mathbf{U}_t,\mathbf{U}_0) \right\|_2^2,
\end{equation}

 which is exactly the score–matching form stated in the proposition.

\medskip
\noindent\textbf{(B) Boundary term.}
Since $P$ and $P^{\mathrm{Bri}}$ share dynamics and only differ at the start, the remaining term in the KL decomposition collapses to a boundary difference (as in DDVI):
\begin{equation}
\mathbb{E}_{Q_\phi}\!\left[\log\frac{\mathrm{d}P^{\mathrm{Bri}}}{\mathrm{d}P}\right]
= \mathbb{E}_{Q_\phi}\!\left[\log\frac{p_0^\theta(\mathbf{U}_1)}{p^{\mathrm{Bri}}_1(\mathbf{U}_1)}\right]
= -\,\mathrm{KL}\!\left(
\mathcal{N}(\mu_\theta(\mathbf{x}),\sigma^2\mathbf{I})
\,\middle\|\, \mathcal{N}(\mathbf{m}_1,\kappa_1\mathbf{I})
\right),
\end{equation}
where $p^{\mathrm{Bri}}_1=\mathcal{N}(\mathbf{m}_1,\kappa_1\mathbf{I})$ follows from Proposition~\ref{prop:bridge-marginal}.

\medskip
\noindent\textbf{(C) Collecting the ELBO terms.}
Adding the model likelihood and prior contributions (as in DDVI; mini-batch scaled by $N/B$) yields, up to constants independent of $(\theta,\phi,\gamma)$,
\begin{align}
\begin{aligned}
\ell_{\mathrm{DBVI}}(\theta,\phi,\gamma)
&= \mathbb{E}_{\mathbf{U}_{0:1}\sim Q_\phi}\Big[
 - \log p_0^\theta\!\big(\mathbf{U}_1\big)
 + \frac{N}{B}\,\log p\!\big(\mathbf{y}_I \mid \mathbf{f}^{(L)}\big)
\\[-2pt]
&\hspace{3.2cm}
 - \frac{1}{2}\int_0^{1} g(t)^2
 \left\Vert
 \frac{\mathbf{U}_t-\mathbf{m}_t}{\kappa_t}
 + s_{\mathrm{cond}}\!\big(t,\mathbf{U}_t,\mathbf{U}_0\big)
 \right\Vert_2^2\,\mathrm{d}t
\\[-2pt]
&\hspace{3.2cm}
 - \mathrm{KL}\!\left(
 \mathcal{N}(\mu_\theta(\mathbf{x}),\sigma^2\mathbf{I})\;\middle\|\;
 \mathcal{N}(\mathbf{m}_1,\,\kappa_1\,\mathbf{I})
 \right)
 + \log p_{\mathrm{prior}}(\mathbf{U}_1)
\Big],
\end{aligned}
\end{align}
which matches the statement.

\medskip
\noindent\textbf{(D) Reduction to DDVI.}
If $\mu_\theta(\mathbf{x})\equiv \mathbf{0}$, then $\mathbf{m}_t\equiv \mathbf{0}$ and the loss reduces to the original DDVI objective (same path integral with $s_{\mathrm{cond}}=s_\phi+h$ and the usual boundary terms).
\end{proof}
\begin{lemma}[Reverse-time Girsanov: path term as score--matching]
\label{lemma1}
Fix $T=1$. Consider two reverse-time SDEs on $\mathbb R^d$ with the same diffusion scale $g(t)>0$:
\begin{align}
\text{(Q) }~~
\mathrm d\mathbf U_t &= \Big(f(\mathbf U_t,t) - g(t)^2\,s_{\mathrm{cond}}(t,\mathbf U_t,\mathbf U_0)\Big)\mathrm dt + g(t)\,\mathrm d\mathbf W_t^{(Q)},\\
\text{(P$^{\mathrm{Bri}}$) }~~
\mathrm d\mathbf U_t &= \Big(f(\mathbf U_t,t) - g(t)^2\,s_{\mathrm{Bri}}(t,\mathbf U_t)\Big)\mathrm dt + g(t)\,\mathrm d\mathbf W_t^{(P)},
\end{align}
where $s_{\mathrm{cond}}=s_\phi+h$ is the conditional (bridge-corrected) score, and 
$s_{\mathrm{Bri}}(t,\mathbf u)=\nabla_{\mathbf u}\log p_t(\mathbf u)$ is the marginal score of the reference bridge
(with marginal $p_t(\mathbf u)=\mathcal N(\mathbf m_t,\kappa_t \mathbf I)$).
Assume standard integrability (e.g.\ Novikov) so that Girsanov applies and both path laws $Q_\phi$ and $P^{\mathrm{Bri}}$ are mutually absolutely continuous on $\mathcal F_T$.
Then the pathwise Kullback--Leibler divergence decomposes as
\begin{align}
\mathrm{KL}(Q_\phi\,\|\,P^{\mathrm{Bri}}) 
~=~ \underbrace{\tfrac12\int_0^T g(t)^2\,\mathbb E_{Q_\phi}\!\left[\big\|s_{\mathrm{cond}}(t,\mathbf U_t,\mathbf U_0)-s_{\mathrm{Bri}}(t,\mathbf U_t)\big\|_2^2\right]\mathrm dt}_{\text{path term}}
~+~ \underbrace{\mathbb E_{Q_\phi}\!\left[\log\frac{q_T(\mathbf U_T)}{p_T(\mathbf U_T)}\right]}_{\text{boundary term}},
\end{align}

where $q_T$ and $p_T$ are the terminal densities of the two reverse processes (equivalently: the initial densities of the corresponding forward processes).
In particular, the path term equals
\begin{align}
\frac12\int_0^{1}\! g(t)^2\,\mathbb E_{Q_\phi}\!\left[
\|\, s_{\mathrm{cond}}(t,\mathbf{U}_t,\mathbf{U}_0)-s_{\mathrm{Bri}}(t,\mathbf{U}_t)\,\|_2^2
\right]\mathrm{d}t,
\end{align}
which is the score--matching quadratic used in~\eqref{eq:path-KL}.
\end{lemma}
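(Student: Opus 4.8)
The plan is to combine the chain rule for Kullback--Leibler divergence on path space with Girsanov's theorem. The two reverse-time SDEs share the non-degenerate scalar diffusion coefficient $g(t)$ and differ only in their drifts, $b_Q(t,\mathbf U_t)=f(\mathbf U_t,t)-g(t)^2 s_{\mathrm{cond}}(t,\mathbf U_t,\mathbf U_0)$ and $b_P(t,\mathbf U_t)=f(\mathbf U_t,t)-g(t)^2 s_{\mathrm{Bri}}(t,\mathbf U_t)$, so that $b_Q-b_P=-g(t)^2\big(s_{\mathrm{cond}}-s_{\mathrm{Bri}}\big)$; in addition they may carry different laws at a single time slice, namely the terminal slice of the reverse process (equivalently, the initial slice of the underlying forward processes), with marginals $q_T$ and $p_T$. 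I would first disintegrate both path laws over the coordinate at that slice. By the KL chain rule, $\mathrm{KL}(Q_\phi\,\|\,P^{\mathrm{Bri}})$ equals the finite-dimensional $\mathrm{KL}(q_T\,\|\,p_T)=\mathbb E_{Q_\phi}[\log q_T(\mathbf U_T)-\log p_T(\mathbf U_T)]$ --- the claimed boundary term --- plus the expected conditional KL between the two path laws once they are forced to agree at that slice.

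For the conditional term I would invoke Girsanov. Conditioned on the common value at the reference slice, both processes are driven It\^o SDEs with the same diffusion coefficient and the same anchoring point, so $\mathrm dQ_\phi/\mathrm dP^{\mathrm{Bri}}$ on $\mathcal F_T$ is a stochastic exponential, and, re-expressing the stochastic integral relative to the $Q_\phi$-Brownian motion $\mathbf W^{(Q)}$,
\[
\log\frac{\mathrm dQ_\phi}{\mathrm dP^{\mathrm{Bri}}}
= \int_0^T g(t)^{-1}\big(b_Q-b_P\big)\cdot\mathrm d\mathbf W_t^{(Q)}
+ \tfrac12\int_0^T g(t)^{-2}\,\big\|b_Q-b_P\big\|_2^2\,\mathrm dt .
\]
Taking $\mathbb E_{Q_\phi}[\cdot]$ and using that, under Novikov's condition, $t\mapsto\int_0^t g(s)^{-1}\big(b_Q-b_P\big)\cdot\mathrm d\mathbf W_s^{(Q)}$ is a genuine ($L^2$) martingale of zero mean, the conditional KL reduces to $\tfrac12\int_0^T g(t)^{-2}\,\mathbb E_{Q_\phi}\big[\|b_Q-b_P\|_2^2\big]\,\mathrm dt$. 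Substituting $b_Q-b_P=-g(t)^2(s_{\mathrm{cond}}-s_{\mathrm{Bri}})$ turns $g(t)^{-2}\|b_Q-b_P\|_2^2$ into $g(t)^2\,\|s_{\mathrm{cond}}(t,\mathbf U_t,\mathbf U_0)-s_{\mathrm{Bri}}(t,\mathbf U_t)\|_2^2$, which is exactly the path term. Finally, Proposition~\ref{prop:bridge-marginal} gives that the reference bridge marginal is $p_t=\mathcal N(\mathbf m_t,\kappa_t\mathbf I)$, hence $s_{\mathrm{Bri}}(t,\mathbf u)=\nabla_{\mathbf u}\log p_t(\mathbf u)=-(\mathbf u-\mathbf m_t)/\kappa_t$, and the integrand rewrites as $\big\|(\mathbf U_t-\mathbf m_t)/\kappa_t+s_{\mathrm{cond}}(t,\mathbf U_t,\mathbf U_0)\big\|_2^2$, matching~\eqref{eq:path-KL}.

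The main obstacle is the bookkeeping behind the disintegration: one must be precise that the slice at which the two process laws are allowed to differ is the initial condition of the reverse SDEs (labelled $t=T$ in the Lemma's convention, since $\mathbf U_t$ is indexed in forward diffusion time), so that the conditional path laws are simply the two driven SDEs launched from a common anchor and the plain Girsanov comparison applies verbatim with no extra Doob-type correction to the drift difference. The remaining ingredients are routine: mutual absolute continuity on $\mathcal F_T$ follows from the shared non-degenerate $g(t)$; the $L^2$-martingale property of the stochastic integral follows from Novikov; and integrability of $\log q_T-\log p_T$ under $Q_\phi$ is automatic here because, by Proposition~\ref{prop:bridge-marginal}, $p_T$ is Gaussian (as is the relevant $q_T$). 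Structurally this is the DDVI Girsanov argument with $s_\phi$ replaced by $s_{\mathrm{cond}}=s_\phi-h$ and the reference mean replaced by $\mathbf m_t$, neither of which changes the computation.
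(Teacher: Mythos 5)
Your proposal is correct, and its analytic core is the same as the paper's: identify the drift difference $b_Q-b_P=-g(t)^2(s_{\mathrm{cond}}-s_{\mathrm{Bri}})$, apply Girsanov with the shared diffusion scale $g(t)$, re-express the stochastic integral against the $Q_\phi$-Brownian motion so that its expectation vanishes and the sign of the quadratic term flips to $+\tfrac12\int_0^T g(t)^{-2}\,\mathbb E_{Q_\phi}\|b_Q-b_P\|_2^2\,\mathrm dt$, and then substitute $s_{\mathrm{Bri}}(t,\mathbf u)=-(\mathbf u-\mathbf m_t)/\kappa_t$ from Proposition~\ref{prop:bridge-marginal}. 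The only genuine divergence is how the boundary term is produced. The paper simply \emph{defines} the Radon--Nikodym derivative as the Dol\'eans--Dade exponential multiplied by the endpoint ratio $q_T(\mathbf U_T)/p_T(\mathbf U_T)$ and reads the boundary term off that definition; you instead obtain it from the KL chain rule, disintegrating both path laws over the slice at which their one-time marginals are allowed to differ and identifying the marginal contribution $\mathrm{KL}(q_T\|p_T)=\mathbb E_{Q_\phi}[\log(q_T(\mathbf U_T)/p_T(\mathbf U_T))]$ plus an expected conditional KL to which plain Girsanov applies. Your route is arguably the cleaner justification, since it explains \emph{why} the extra factor is exactly a density ratio at that slice rather than postulating it; the paper's route is more direct but leaves implicit the verification that tilting by an endpoint function still yields the claimed process law. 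As you note, the one point requiring care in your version is the bookkeeping of which slice carries the free marginal (the terminal state of the reverse SDE in the paper's forward-indexed convention), and your discussion handles this correctly; the remaining integrability claims (Novikov, $L^2$ martingale, Gaussian $p_T$) match the paper's assumptions.
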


\begin{proof}
We write both SDEs on a common probability space up to an equivalent change of measure. 
Let the \emph{reference} path law be $P^{\mathrm{Bri}}$ (drift $b_P:=f-g^2 s_{\mathrm{Bri}}$). 
Under $P^{\mathrm{Bri}}$, define the progressively measurable process
\begin{align}
\vartheta_t \;:=\; \frac{b_Q(\mathbf U_t,t)-b_P(\mathbf U_t,t)}{g(t)} 
~=~ -\,g(t)\,\big(s_{\mathrm{cond}}(t,\mathbf U_t,\mathbf U_0)-s_{\mathrm{Bri}}(t,\mathbf U_t)\big),
\end{align}
where $b_Q:=f-g^2 s_{\mathrm{cond}}$.
By Novikov's condition, the D\^oleans--Dade exponential
\begin{align}
Z_t \;=\; \exp\!\Big( \int_0^t \vartheta_s^\top\,\mathrm d\mathbf W_s^{(P)} 
 - \tfrac12\int_0^t \|\vartheta_s\|^2\,\mathrm ds \Big)
\end{align}
is a true $P^{\mathrm{Bri}}$-martingale. Define a new measure $Q_\phi$ on $\mathcal F_T$ by $\frac{\mathrm dQ_\phi}{\mathrm dP^{\mathrm{Bri}}}\big|_{\mathcal F_T}=Z_T\cdot\frac{q_T(\mathbf U_T)}{p_T(\mathbf U_T)}$, i.e.\ we also correct the endpoint density (boundary term) so that the terminal marginal under $Q_\phi$ is $q_T$. 
Girsanov's theorem yields that under $Q_\phi$,
\begin{align}
\mathbf W_t^{(Q)} \;:=\; \mathbf W_t^{(P)} - \int_0^t \vartheta_s\,\mathrm ds
\end{align}
is a Brownian motion and the drift becomes $b_Q=f-g^2 s_{\mathrm{cond}}$, i.e.\ the reverse SDE for $Q_\phi$.

Now compute the log Radon--Nikodym derivative and take $Q_\phi$-expectation:
\begin{align}
\mathrm{KL}(Q_\phi\,\|\,P^{\mathrm{Bri}}) 
&= \mathbb E_{Q_\phi}\!\left[\log\frac{\mathrm dQ_\phi}{\mathrm dP^{\mathrm{Bri}}}\right] \\
&= \mathbb E_{Q_\phi}\!\left[\int_0^T \vartheta_t^\top\,\mathrm d\mathbf W_t^{(P)} - \frac12\int_0^T \|\vartheta_t\|^2\,\mathrm dt \right]
~+~ \mathbb E_{Q_\phi}\!\left[\log\frac{q_T(\mathbf U_T)}{p_T(\mathbf U_T)}\right].
\end{align}
Use $\mathrm d\mathbf W_t^{(P)}=\mathrm d\mathbf W_t^{(Q)}+\vartheta_t\,\mathrm dt$ to rewrite the stochastic integral:
\begin{align}
\int_0^T \vartheta_t^\top\,\mathrm d\mathbf W_t^{(P)}
~=~ \int_0^T \vartheta_t^\top\,\mathrm d\mathbf W_t^{(Q)} + \int_0^T \|\vartheta_t\|^2\,\mathrm dt.
\end{align}
Taking $Q_\phi$-expectation annihilates the martingale term, hence
\begin{align}
\mathrm{KL}(Q_\phi\,\|\,P^{\mathrm{Bri}})
~=~ \frac12\,\mathbb E_{Q_\phi}\!\left[\int_0^T \|\vartheta_t\|^2\,\mathrm dt\right]
~+~ \mathbb E_{Q_\phi}\!\left[\log\frac{q_T(\mathbf U_T)}{p_T(\mathbf U_T)}\right].
\end{align}
Finally substitute $\vartheta_t=-\,g(t)\,\big(s_{\mathrm{cond}}(t,\mathbf U_t,\mathbf U_0)-s_{\mathrm{Bri}}(t,\mathbf U_t)\big)$ to get
\begin{align}
\frac12\int_0^T g(t)^2\,\mathbb E_{Q_\phi}\!\left[\big\|s_{\mathrm{cond}}(t,\mathbf U_t,\mathbf U_0)-s_{\mathrm{Bri}}(t,\mathbf U_t)\big\|_2^2\right]\mathrm dt
\end{align}
as the path term, plus the boundary correction.
\end{proof}

\begin{figure}[t]
\centering
\includegraphics[width=0.78\linewidth]{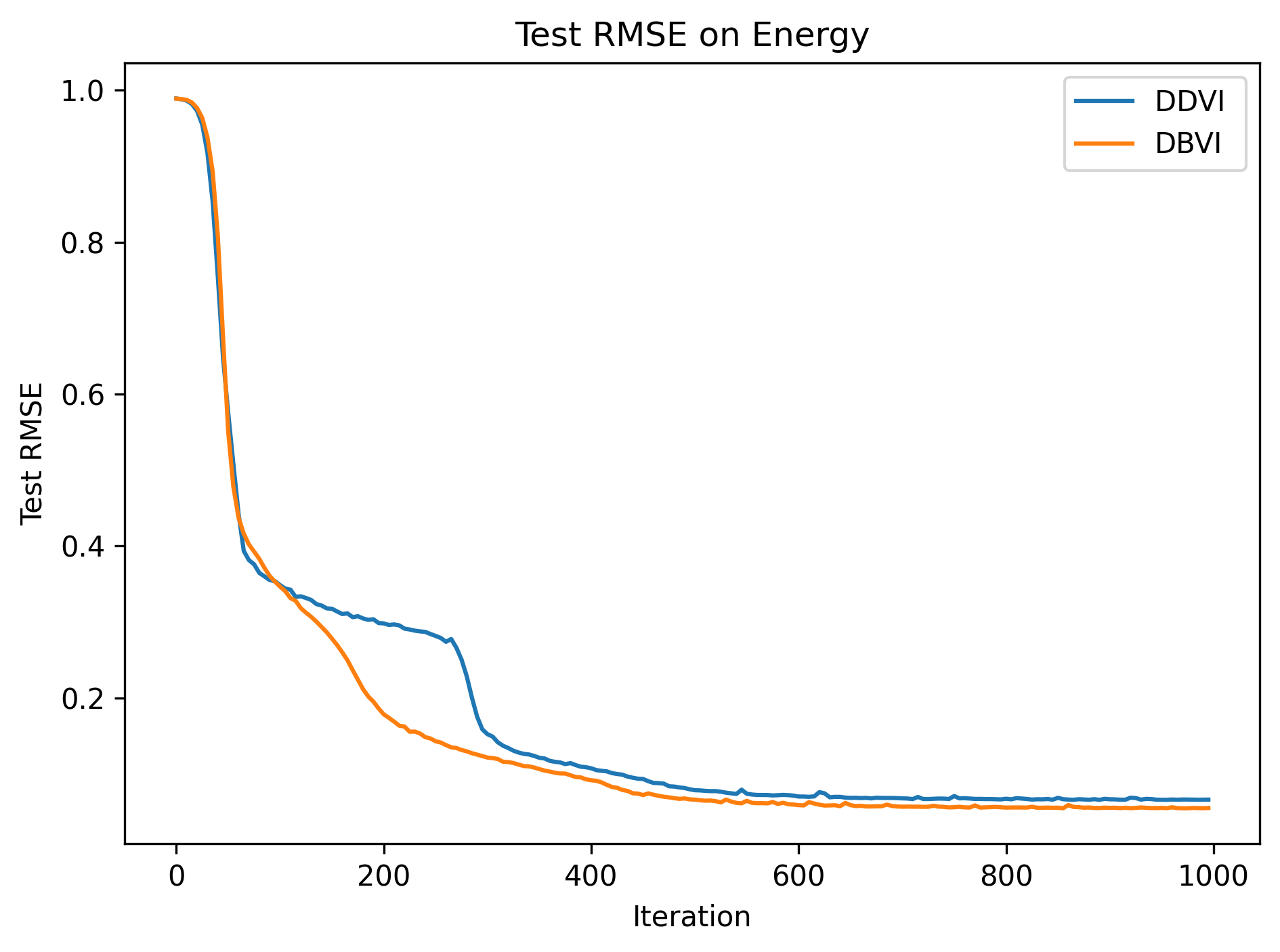}
\caption{Comparison of DDVI and DBVI on the \textsc{Energy} dataset (Test RMSE).}
\label{fig:ddvi_dbvi_rmse_energy}
\end{figure}

\begin{figure}[t]
\centering
\includegraphics[width=0.78\linewidth]{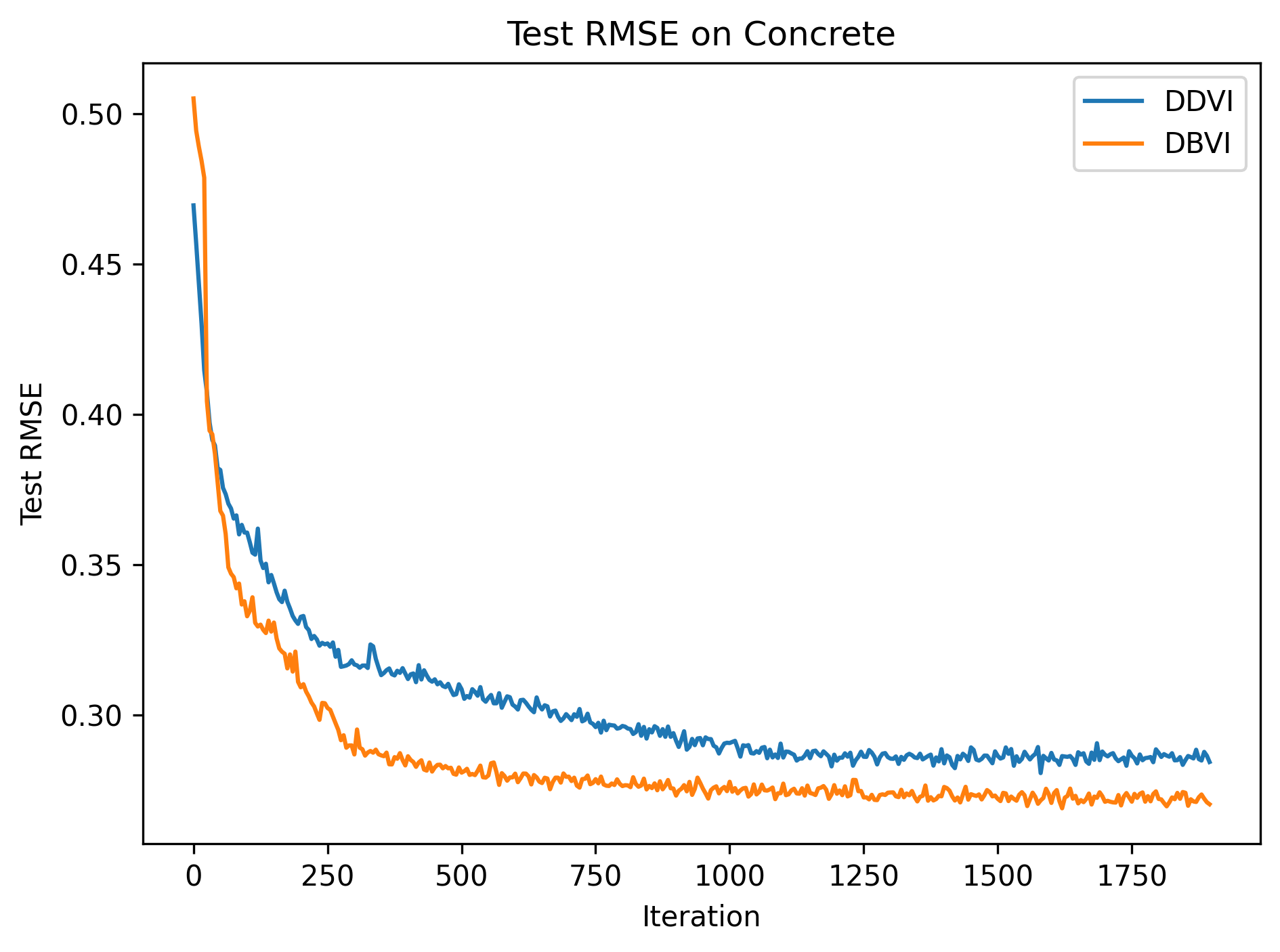}
\caption{Comparison of DDVI and DBVI on the \textsc{Concrete} dataset (Test RMSE).}
\label{fig:ddvi_dbvi_rmse_concrete}
\end{figure}
\begin{figure}[t]
\centering
\includegraphics[width=0.78\linewidth]{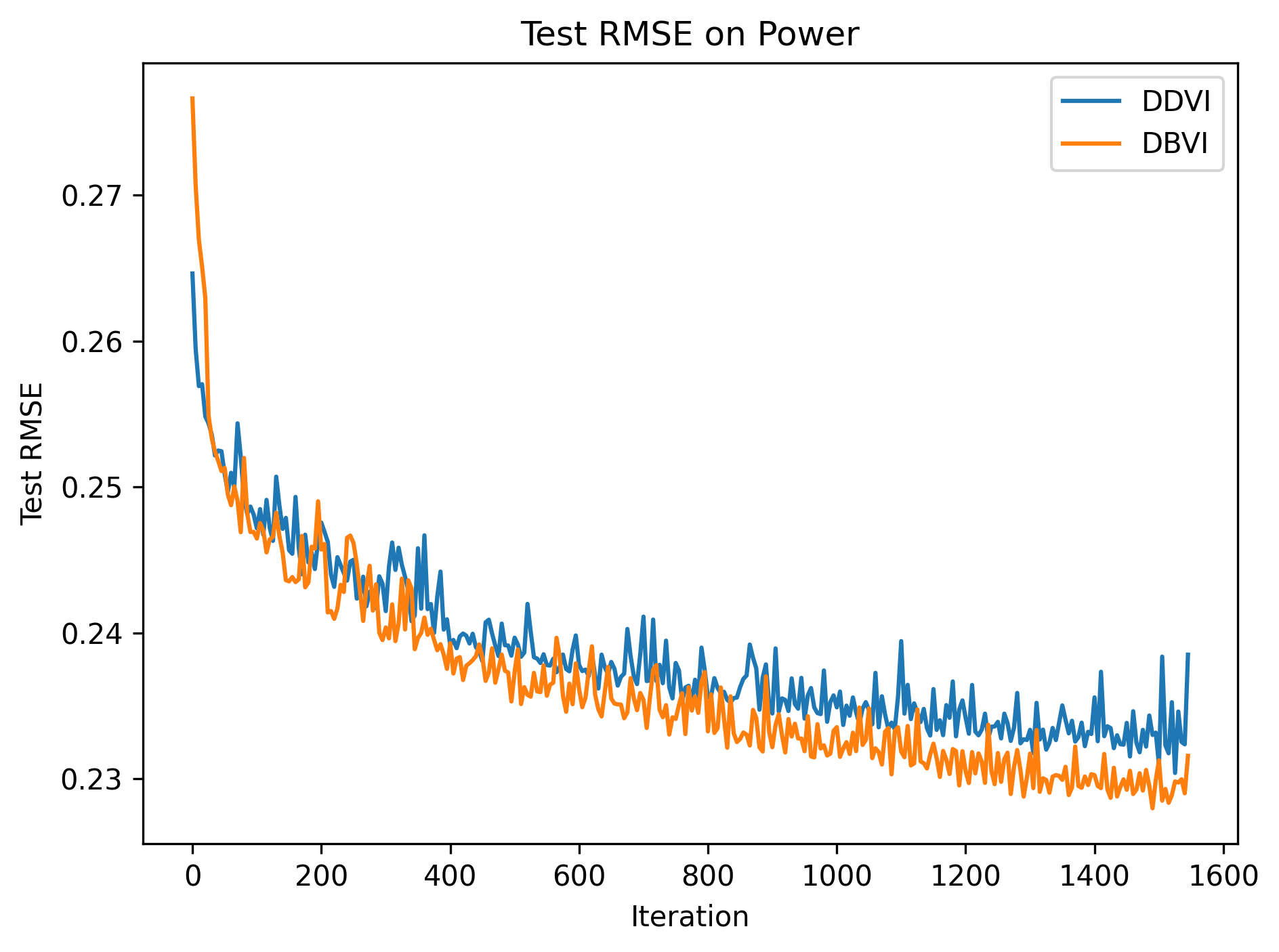}
\caption{Comparison of DDVI and DBVI on the \textsc{Power} dataset (Test RMSE).}
\label{fig:ddvi_dbvi_rmse_power}
\end{figure}
\section{Ablation study}
\subsection{Convergence Speed under Matched Compute}
We directly compare DDVI and DBVI under matched compute (same $T$, optimizer, and hardware) 
and track Test RMSE during early optimization. 
As shown in Figures \ref{fig:ddvi_dbvi_rmse_energy}, \ref{fig:ddvi_dbvi_rmse_concrete} and \ref{fig:ddvi_dbvi_rmse_power}, DBVI reduces Test RMSE more rapidly and consistently 
achieves lower error than DDVI. 
This confirms that conditioning the reverse diffusion on a data-dependent initialization 
shortens the inference trajectory and improves convergence speed on the \textsc{Energy}, \textsc{Concrete} and \textsc{Power} datasets.

\subsection{Trajectory Shortening Diagnostic via Path Length}
\label{subsec:path_length}

A key motivation of DBVI is that the reverse diffusion should traverse a shorter and less complex trajectory, 
since the amortized initialization is closer to the true posterior.
To quantitatively diagnose this effect, we report a discrete path-length proxy for the reverse diffusion trajectory,
\begin{equation}
\label{eq:path_length}
L \;=\; \mathbb{E}\Bigg[\sum_{k=0}^{K-1}\big\|U_{t_{k+1}} - U_{t_k}\big\|_2^2\Bigg],
\end{equation}
where the expectation is taken over the stochasticity of the reverse SDE (and mini-batch sampling) and $\{t_k\}_{k=0}^{K}$ denotes the discretization grid of the reverse-time SDE with $K$ steps.
Intuitively, $L$ measures how far the trajectory moves in latent space during sampling, 
and smaller values indicate a shorter reverse diffusion path.

Empirically, DBVI consistently yields substantially smaller path length than DDVI.
On the Concrete dataset, DBVI achieves an average path length of $0.368$, 
compared to $12.167$ for DDVI.
On the Energy dataset, DBVI similarly yields $0.367$ versus $11.953$ for DDVI.
These results provide direct quantitative evidence that the proposed bridge initialization 
significantly shortens the reverse diffusion trajectory.

\section{Statement on the Use of Large Language Models} 
Large language models (LLMs) were used solely for polishing and editing the text of this manuscript.

\end{document}